\newcommand{\prl}[1]{\left(#1\right)}
\newcommand{\brl}[1]{\left[#1\right]}
\newcommand{\crl}[1]{\left\{#1\right\}}
\newcommand{\norm}[1]{\left\lVert#1\right\rVert}
\newcommand*{\Bg}{\mathbf{g}}
\newcommand*{\Bq}{\mathbf{q}}
\newcommand*{\Br}{\mathbf{r}}
\newcommand*{\Bs}{\mathbf{s}}
\newcommand*{\Bu}{\mathbf{u}}
\newcommand*{\Bv}{\mathbf{v}}
\newcommand*{\Bx}{\mathbf{x}} 
\newcommand*{\By}{\mathbf{y}}
\newcommand*{\Bz}{\mathbf{z}}
\newcommand*{\BA}{\mathbf{A}} 
\newcommand*{\BB}{\mathbf{B}} 
\newcommand*{\BC}{\mathbf{C}}
\newcommand*{\BK}{\mathbf{K}} 
\newcommand*{\BI}{\mathbf{I}} 
\newcommand*{\BJ}{\mathbf{J}} 
\newcommand*{\BP}{\mathbf{P}} 
\newcommand*{\BQ}{\mathbf{Q}}
\newcommand*{\BU}{\mathbf{U}} 
\newcommand*{\BV}{\mathbf{V}} 
\newcommand*{\Bzero}{\mathbf{0}} 
\newcommand*{\Bxi}{\boldsymbol{\xi}}
\newcommand*{\C}[1]{\mathcal{#1}}
\newcommand*{\CE}{\mathcal{E}}
\newcommand*{\CF}{\mathcal{F}}
\newcommand*{\CP}{\mathcal{P}}
\newcommand*{\CO}{\mathcal{O}}
\newcommand*{\LS}{\mathcal{LS}}
\newcommand*{\Rn}{\mathbb{R}^n}
\newcommand*{\pdm}{\mathbb{S}^n_{>0}}
\newcommand*{\psdm}{\mathbb{S}^n_{\geq 0}}
\newcommand*{\rp}{\Bx}
\newcommand*{\rv}{\dot{\rp}}
\newcommand*{\gp}{\Bg}
\newcommand*{\damping}{\zeta}
\newcommand*{\drp}{\dot{\rp}}
\newcommand*{\ddrp}{\ddot{\rp}}
\newcommand*{\dgp}{\dot{\gp}}
\newcommand*{\lpg}{\bar{\gp}}
\newcommand*{\DeltaE}{\Delta E}
\newtheorem{theorem}{Theorem}
\newtheorem{lemma}{Lemma}
\theoremstyle{definition}
\newtheorem{definition}{Definition}
\newtheorem*{assumption*}{Assumption}
\newtheorem*{problem*}{Problem}
\newtheorem{problem}{Problem}
\theoremstyle{remark}
\newtheorem*{solution*}{Solution}
\def\thetitle{Fast and Safe Path-Following Control using a State-Dependent Directional Metric}
\def\theauthor{Zhichao Li, Omur Arslan, Nikolay Atanasov}
\def\thekeywords{quadratic norm, ellipsoidal trajectory bound, reference governor, Lyapunov theory}
\title{\LARGE \bf \thetitle}
\author{Zhichao Li$^{1}$ \and {\"O}m{\"u}r Arslan$^{2}$ \and Nikolay Atanasov$^{1}$%
\thanks{We gratefully acknowledge support from NSF CRII IIS-1755568, ARL DCIST CRA W911NF-17-2-0181, and ONR SAI N00014-18-1-2828.}
\thanks{$^{1}$Z. Li and N. Atanasov are with the Department of Electrical and Computer Engineering, University of California, San Diego, La Jolla, CA 92093, USA {\tt\small \{zhl355,natanasov\}@eng.ucsd.edu}.}%
\thanks{$^{2}${\"O}. Arslan is with the Department of Mechanical Engineering, Eindhoven University of Technology, P.O. Box 530, 5600 MB, Eindhoven, The Netherlands  {\tt\small o.arslan@tue.nl}.}%
}
\begin{document}
\maketitle
\begin{abstract}
This paper considers the problem of fast and safe autonomous navigation in partially known environments. Our main contribution is a control policy design based on ellipsoidal trajectory bounds obtained from a quadratic state-dependent distance metric. The ellipsoidal bounds are used to embed directional preference in the control design, leading to system behavior that is adapted to local environment geometry, carefully considering medial obstacles while paying less attention to lateral ones. We use a virtual reference governor system to adaptively follow a desired navigation path, slowing down when system safety may be violated and speeding up otherwise. The resulting controller is able to navigate complex environments faster than common Euclidean-norm and Lyapunov-function-based designs, while retaining stability and collision avoidance guarantees.
\end{abstract}

\section{Introduction}
\label{sec:introduction}
Advances in embedded sensing and computation have enabled robot applications in unstructured environments and in close interaction with humans, including autonomous transportation, inspection and cleaning services, and medical robotics.  Safe, yet efficient robot navigation is important for these applications but is challenging due to partially known or rapidly changing operational conditions. 

In motion planning, optimality guarantees have been achieved for geometric path planning~\cite{ARAstar,RRTstar} but incorporating robot dynamics without violating these guarantees remains an active area of research.  To achieve efficient behavior for a dynamical system, optimal control theory is used together with sampling-based or search-based motion planners. Sampling-based methods connect neighboring states using locally optimal control such as linear-quadratic-regulation (LQR)~\cite{lqr_rrt_star_perez2012} or fixed-final-state-free-final-time optimal control~\cite{kinodynamic_rrts_webb2013}. Locally optimal control, however, does not necessarily lead to global optimality~\cite{pacelli2018integration}. Search-based methods construct a safe corridor~\cite{SFC,SFC_FM} (a connected safe region in free space) and seek a composition of short motion primitives within. Depending on the primitive design, the resulting trajectory may already be dynamically feasible~\cite{Liu_DynamicTrajectoryPlanning_IROS17} or may be optimized locally using model predictive control (MPC)~\cite{SFC}. These techniques do not provide formal guarantees for joint collision avoidance and stability.

To guarantee safety formally, most existing works rely on Lyapunov theory and reachable set computations. A \emph{funnel} is an outer approximation of the reachable set of a dynamical system in the presence of disturbances~\cite{funnel_idea}. Building on the seminal work of~\cite{burridge1999sequential}, sequential composition of funnels offers effective means of guaranteeing safe navigation~\cite{lqr_tree_tedrake2009, Funnel_lib, gawron2017vfo, gawron_2018IROS_VFO}. Using sum-of-squares optimization~\cite{sos_stability}, these techniques can deal with nonlinear systems, nonholonomic constraints, and bounded disturbances. Recently, control barrier functions methods~\cite{CBF_ames2014rapidly, CBF_wu2015safety, CBF_ames2016control, CBF_quadrotor} have received significant attention. While optimizing performance without sacrificing stability using control Lyapunov functions (CLF), safety constraints are handled by a control barrier function (CBF). A virtual reference governor system~\cite{garone2016_ERG, kolmanovsky2014ref_cmd_gov} may also be used to enforce safety constraints as an add-on control scheme to pre-stabilized dynamical systems. Using a reference governor design, \cite{Gov_ICRA17} enables safe navigation for an acceleration-controlled robot among spherical obstacles by adaptive tracking a first-order vector field. 

\begin{figure}[t]
	\centering
	\includegraphics[width=\linewidth]{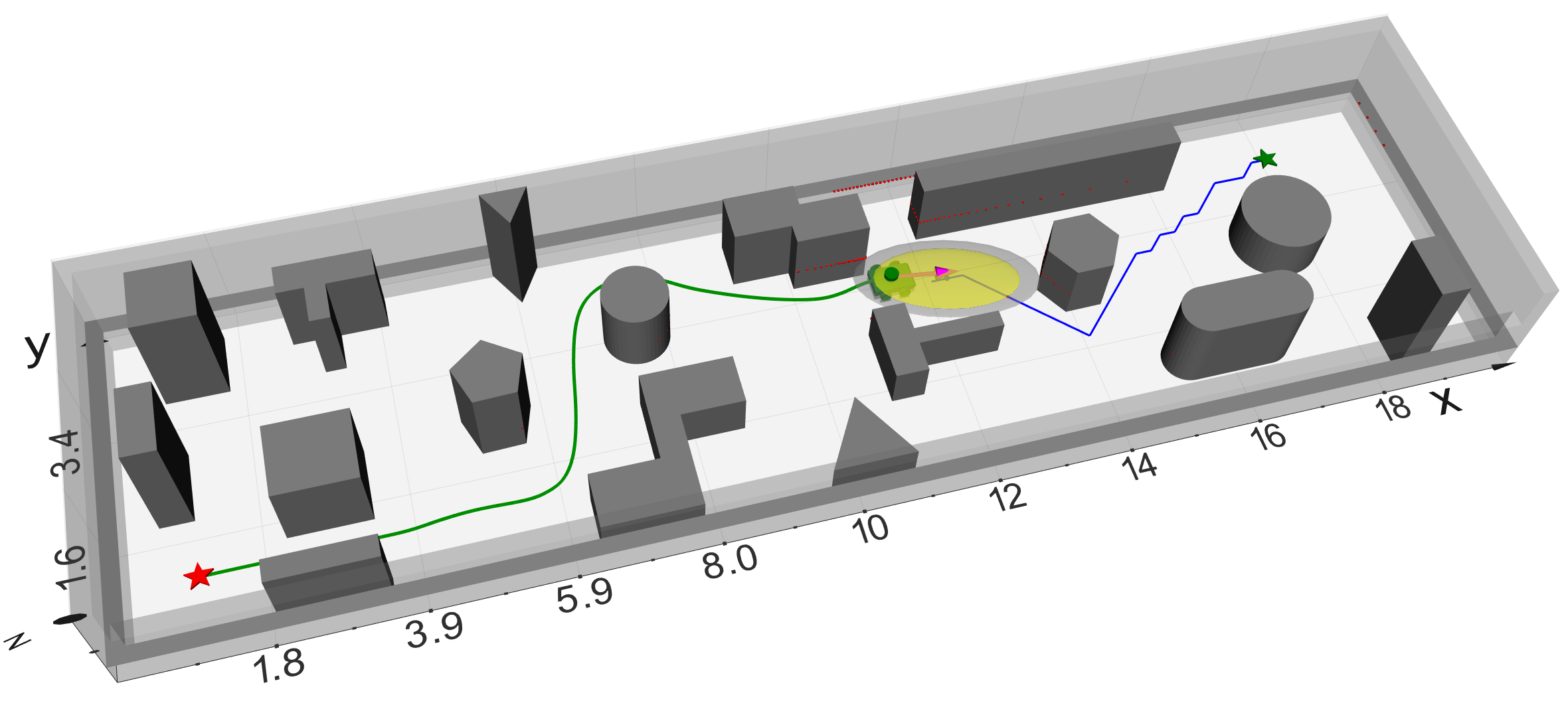}
	\caption{A robot equipped with a lidar scanner navigates in an unknown cluttered environment. A state-dependent metric that considers the robot's direction of motion is designed to approximate the robot's future trajectory (yellow ellipse) and quantify its safety (gray ellipse) with respect to surrounding obstacles. An adaptive controller guarantees safety and stability based on these measures.}
	\label{fig:complex_cooridor_sim}
	\vspace{-0.15in}
\end{figure}

The importance of considering configuration space geometry and system dynamics jointly when designing a steering function for sampling-based kinodynamic planning is discussed in~\cite{pacelli2018integration}. 
Metrics based on Mahalanobis distance, linear-quadratic-regulator cost, and a Gram matrix derived from system linearization are considered. Inspired by this work, we observe that using a static distance metric to quantify the safety of a robot with respect to surrounding obstacles can significantly impact its performance in real applications. For example, an autonomous golf cart running on campus has to simultaneously maintain safe distance from pedestrians and, yet, be able to squeeze through narrow passages such as doors or road block pillars. Static safety measures do not take the system's velocity direction into account, leading to overly cautious behavior even if the direction of travel is completely orthogonal to nearby obstacle surfaces. We refer to this limitation as the \emph{corridor effect} and aim to design an adaptive path-following controller, mitigating this effect via a new metric that takes the system's state into account when quantifying safety. 
The two main \textbf{contributions} of this work are highlighted as follows. First, we propose a new \emph{state-dependent directional metric} and develop accurate system trajectory bounds for linearized robot dynamics that take direction of motion into account. Second, we develop an adaptive feedback controller, based on the directional trajectory bounds, and prove that it ensures stable and collision-free navigation. The controller relies only on local obstacle information, easily obtainable from onboard sensors, and provides fast tracking performance in complex unknown environments. See Fig.~\ref{fig:complex_cooridor_sim} for an illustration.
\section{Notation}
Let $\pdm$ and $\psdm$ denote the set of $n\times n$ symmetric positive and semi-definite matrices. Let $\succ$ and $\succeq$ denote the generalized inequalities associated with $\pdm$ and $\psdm$. Denote the Euclidean ($\ell^2$) norm by $\norm{\Bx}$ and the quadratic norm induced by $\BQ \in \pdm$ as $\norm{\Bx}_\BQ \coloneqq \sqrt{ \Bx^T \BQ \Bx}$. Let $\lambda_{\max}(\BQ)$ and $\lambda_{\min}(\BQ)$ be the maximum and minimum eigenvalues of $\BQ$. Let $d_\BQ(\Bx, \C{A}) \coloneqq \inf_{\mathbf{a} \in \C{A}} \norm{\Bx-\mathbf{a}}_\BQ$ denote the quadratic norm distance from a point $\Bx$ to a set $\C{A}$. Given $\BQ \in \pdm$ and scaling $\eta \geq 0$, denote the associated ellipsoid centered at $\Bq \in \Rn$ by $\C{E}_\BQ(\Bq , \eta) \coloneqq \crl{\Bx \in \Rn \mid (\Bx-\Bq )^T \BQ (\Bx-\Bq ) \leq \eta}$.

\section{Problem Formulation}
\label{sec:problem_formulation}
Consider a robot operating in an unknown environment $\C{W} \subset \mathbb{R}^n$. Denote the obstacle space by a closed set $\CO$ and the free space by an open set $\C{F} \coloneqq \C{W}  \setminus \C{O}$.
Suppose that the robot dynamics are controllable, linear, and time-invariant:
\begin{equation}
\label{eq:lin_system}
\dot{\Bs} = \BA \Bs + \BB \Bu
\end{equation}
where $\Bu \in \mathbb{R}^{n_u}$ is the control input and $\Bs := \prl{\Bx,\By} \in \mathbb{R}^{n_s}$ is the robot state\footnote{Transpose operations are omitted when grouping vectors for conciseness.}, decomposed into constrained variables $\Bx$ and free variables $\By$. Throughout this paper, $\Bx(t) \in \Rn$  represents the robot position, required to remain within $\CF$ for all $t \geq 0$, while $\By(t) \in \mathbb{R}^{n_s - n}$ denotes higher-order (velocity, acceleration, jerk, \ldots) terms. Our objective is to design a closed-loop control policy $\Bu(t)$, ensuring that the robot follows a given navigation path in the free space.
\begin{definition}
A \emph{path} is a piecewise-continuous function $\CP: \brl{0,1} \mapsto \mathring{\CF}$ that maps a path-length parameter $\alpha \in \brl{0,1}$ to the interior of free space. The start $\CP(0)$ and the end $\CP(1)$ of a navigation path $\CP(\alpha)$ are in the interior of free space, i.e., $\CP(0), \CP(1) \in \mathring{\CF}$.
\end{definition}
A path $\CP$ may be provided by a geometric planning algorithm~\cite{lavalle2006planning,ARAstar,RRTstar}. We consider the following problem.
\begin{problem}
Given a path $\CP$, design a control policy $\Bu(t)$ so that the robot~\eqref{eq:lin_system} is asymptotically steered from the start to the end of $\CP$ while remaining collision-free, i.e., $\rp(t) \in \CF$ for all $t \geq 0$. 
\end{problem}

\section{Technical Approach}
\label{sec:technical_approach}
In this section, we propose a novel state-dependent directional metric (SDDM) and show that closed-loop trajectories of~\eqref{eq:lin_system} can be bounded in the SDDM by solving a convex optimization problem. We develop a feedback control law that exploits the trajectory bounds to stabilize the robot and follow the path $\Br$ adaptively, slowing down when safety may be endangered and speeding up otherwise.

\subsection{State-Dependent Directional Metric}
\label{sec:directional_sense}
As mentioned in the introduction, measuring safety using a static Euclidean norm may lead to system performance suffering from the \emph{corridor effect}. We propose a quadratic distance measure $\norm{\cdot}_\BQ$ that assigns priority to obstacles depending on the robot's moving direction. The level sets of $\norm{\cdot}_\BQ$ are ellipsoids $\C{E}_\BQ(\Bzero, \eta)$ whose shape and orientation are determined by the matrix $\BQ$. Our idea is to encode a desired directional preference in the distance metric via an appropriate choice of $\BQ$. Consider the example in Fig.~\ref{fig:norm_comparison}. A quadratic norm, well-aligned with the local environment geometry, may provide a more accurate evaluation of safety than a static Euclidean norm. Based on this observation, we propose a general construction of a directional matrix $\BQ[\Bv]$, in the direction of vector $\Bv$, that defines a state-dependent directional metric.
\begin{definition}
	\label{def:directional_matrix}
	A \emph{directional matrix} associated with vector $\Bv$ and scalars $c_2 > c_1 > 0$ is defined as
	\begin{equation} \label{eq:directional_matrix}
	\BQ \brl{\Bv} =\begin{cases}
	c_2 \BI + (c_1 - c_2) \frac{\Bv \Bv^T}{\norm{\Bv}^2}, & \text{if}\ \Bv \neq 0, \\
	c_1 \BI, & \text{otherwise}.
	\end{cases}
	\end{equation} 
\end{definition}
The unit ellipsoid $\CE_{\BQ\brl{\Bv}}(\Bx, 1)$ centered at $\Bx$ generated by a directional matrix $\BQ\brl{\Bv}$ is elongated in the direction of $\Bv$.

\begin{figure}[t]
	\centering
	\includegraphics[width=0.48\linewidth,valign=t]{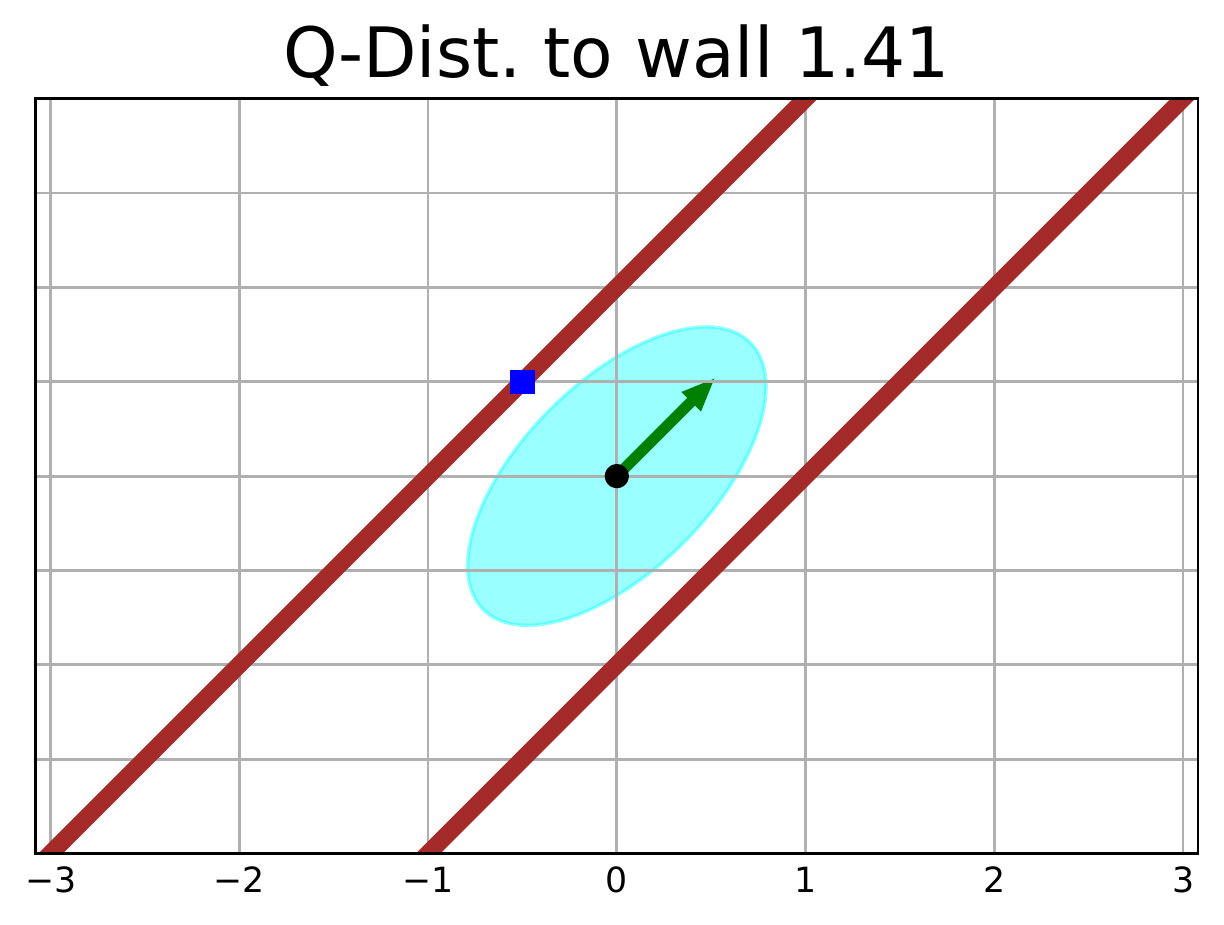}
	\includegraphics[width=0.48\linewidth,valign=t]{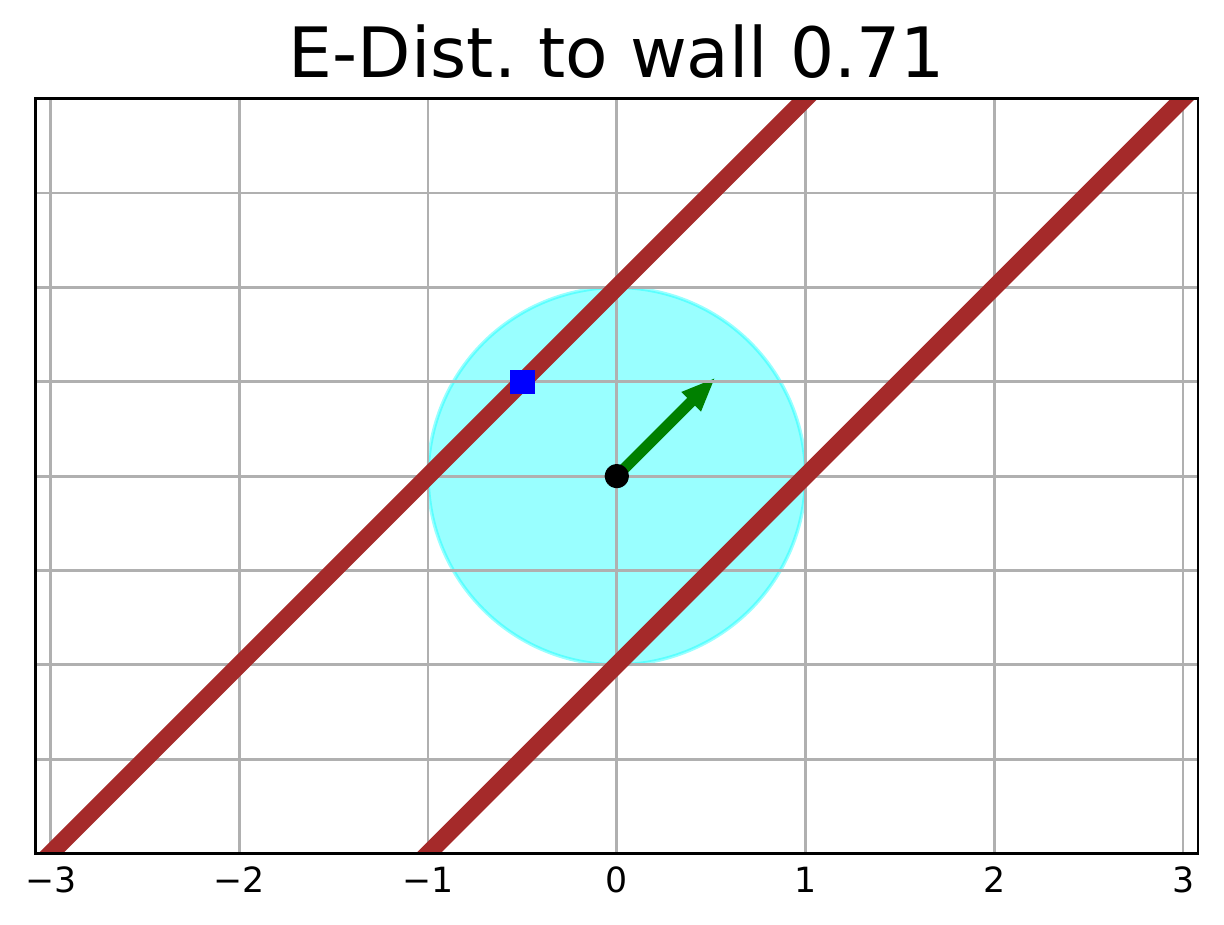}
	\caption{Robot (black dot) moving in direction $\Bv := \brl{\sqrt{2}/2, \sqrt{2}/2}$ (green arrow) along a corridor. The distances, measured by a quadratic norm $\norm{\cdot}_\BQ$ (left) and Euclidean norm $\norm{\cdot}$ (right), from the robot to the closest point (small blue square) on the wall (red line) are 1.41 and 0.71. The matrix $\BQ = [[2.5 -1.5],[-1.5, 2]]$ is defined as a \emph{directional matrix} $\BQ\brl{\Bv}$.}
	\label{fig:norm_comparison}
\end{figure}

\begin{lemma}
\label{lem:directional_metric}
For any vector $\Bv$, the directional matrix $\BQ \brl{\Bv}$ is symmetric positive definite.
\end{lemma}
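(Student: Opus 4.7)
The plan is to handle the two cases separately and, in the nontrivial case $\Bv\neq 0$, recognize $\BQ[\Bv]$ as a rank-one update of a scaled identity whose eigenstructure is explicit.

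Symmetry is immediate in both branches: $\BI$ is symmetric and $\Bv\Bv^T$ is a symmetric rank-one matrix, so any real linear combination of the two is symmetric; the $\Bv=0$ branch is just $c_1\BI$.

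For positive definiteness with $\Bv=0$ the claim is trivial since $c_1>0$. For $\Bv\neq 0$, I would set $\hat{\Bv}\coloneqq \Bv/\norm{\Bv}$ and rewrite
\begin{equation*}
\BQ[\Bv] \;=\; c_1\,\hat{\Bv}\hat{\Bv}^T \;+\; c_2\bigl(\BI-\hat{\Bv}\hat{\Bv}^T\bigr),
\end{equation*}
which is a sum of the orthogonal projector onto $\mathrm{span}(\Bv)$ and the orthogonal projector onto its orthogonal complement, scaled by $c_1$ and $c_2$, respectively. These two projectors commute and share an orthonormal eigenbasis, so $\BQ[\Bv]$ has eigenvalue $c_1>0$ in the direction $\hat{\Bv}$ and eigenvalue $c_2>0$ with multiplicity $n-1$ on $\hat{\Bv}^\perp$, yielding $\BQ[\Bv]\succ 0$.

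Equivalently, a one-line algebraic check works: for any $\Bx\in\Rn$,
\begin{equation*}
\Bx^T\BQ[\Bv]\Bx \;=\; c_2\norm{\Bx}^2 + (c_1-c_2)\tfrac{(\Bv^T\Bx)^2}{\norm{\Bv}^2},
\end{equation*}
and since $c_1-c_2<0$, the Cauchy--Schwarz bound $(\Bv^T\Bx)^2\leq \norm{\Bv}^2\norm{\Bx}^2$ gives $\Bx^T\BQ[\Bv]\Bx\geq c_1\norm{\Bx}^2$, which is strictly positive for $\Bx\neq 0$. There is no real obstacle here; the only thing to be careful about is the sign of $c_1-c_2$ when applying Cauchy--Schwarz (the inequality flips), which is exactly where the assumption $c_2>c_1>0$ is used.
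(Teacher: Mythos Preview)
Your proof is correct and your Cauchy--Schwarz argument is essentially identical to the paper's own proof, which expands $\Bq^T\BQ[\Bv]\Bq$ and bounds it below by $c_1\norm{\Bq}^2$ in exactly the same way. The eigenstructure decomposition $\BQ[\Bv]=c_1\hat{\Bv}\hat{\Bv}^T+c_2(\BI-\hat{\Bv}\hat{\Bv}^T)$ you give first is a pleasant extra that the paper does not spell out, but it does not change the overall approach.
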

\begin{proof}
Since $\Bv \Bv^T$ is symmetric, $\BQ\brl{\Bv}^T = \BQ\brl{\Bv}$. If $\Bv=\Bzero$, $\BQ\brl{\Bv} = c_1 \BI$ is positive definite. If $\Bv \neq \Bzero$ and $\Bq$ is arbitrary:
\begin{align*}
\Bq^T \BQ\brl{\Bv} \Bq  &= c_2 \Bq^T \Bq + (c_1 - c_2) \frac{(\Bq^T \Bv)^2}{\norm{\Bv}^2} \\
            &\geq c_2 \Bq^T \Bq + (c_1 - c_2) \frac{\norm{\Bq}^2 \norm{\Bv}^2}{\norm{\Bv}^2} = c_1 \norm{\Bq}^2,
\end{align*}
which follows from $c_2 > c_1$ and the Cauchy-Schwarz inequality. The proof is completed by noting that $c_1 > 0$.
\end{proof}
\subsection{Trajectory Bounds using SDDM}
\label{sec:ellipsoid_est}
Using a directional matrix, one can define an SDDM to adaptively evaluate the risk of surrounding obstacles. We will show how to use an SDDM to obtain bounds on the closed-loop trajectory of the constrained state $\Bx(t)$ in~\eqref{eq:lin_system}. Assume the robot is stabilized by a feedback controller $\Bu = -\BK \Bs$. The closed-loop dynamics are:
\begin{equation}
	\label{eq:LTI_closed_loop}
	\dot{\Bs} 	= \bar{\BA} \Bs \qquad
	\Bz 		= \BC \Bs 
\end{equation}
where $\bar{\BA} := (\BA - \BB \BK)$ is Hurwitz. Any initial state $\Bs_0 \coloneqq \Bs(t_0)$ will converge exponentially to the equilibrium point at origin. An output $\Bz$ is introduced to consider the constrained state $\Bx$. We are interested in measuring the maximum deviation of $\Bx(t)$ for $t \geq 0$ from the origin using a directional measure determined by the orientation of initial state $\Bx_0 \coloneqq \Bx(t_0)$ with respect to $\mathbf{0}$. Define an SDDM using the directional matrix:
\begin{equation}
\label{eq:directional_matrix_by_IC}
	 \BQ	\coloneqq \BQ \brl{\Bzero -\Bx_0} \in \pdm
\end{equation}
and choose output $\Bz(t) =  \BQ^{\frac{1}{2}} \Bx(t)$ so that $\BC \coloneqq \BQ^{\frac{1}{2}} \BP$, where $\BP \coloneqq \brl{\BI, \Bzero}$ is the projection matrix from $\Bs$ to $\Bx$. Note that $\Bz(t)^T \Bz(t) = \Bx(t)^T \BQ \Bx(t) = \norm{\Bx(t)}^2_{\BQ}$. Thus, measuring the maximum deviation of $\Bx(t)$ in the SDDM is equivalent to finding the output peak along the robot trajectory.
\begin{equation}
\label{eq:eta_def}
\eta(t_0) \coloneqq \max_{t\geq t_0} \norm{\rp(t)}_{\BQ}^2 = \max_{t\geq t_0}  \|\Bz(t)\|^2 
\end{equation} 
We outline two approaches to solve this problem.

\subsubsection{Exact solution} 
\label{sec:traj_est_exact_bound}
The output peak $\eta(t_0)$ can be computed exactly by comparing the values of $\|\Bz(t)\|^2$ at the boundary point $t = t_0$ and all critical points $\crl{t > t_0 \mid \frac{d}{dt} \|\Bz(t)\|^2 = 0}$. Since the closed-loop system in~\eqref{eq:LTI_closed_loop} is linear time-invariant, $\Bs(t)$ can be obtained in closed form. Let $\bar{\BA} = \BV \BJ \BV^{-1}$ be the Jordan decomposition of $\bar{\BA}$, where $\BJ$ is block diagonal. The critical points satisfy:
\begin{align}
0 &= \frac{d}{dt} \Bz(t)^T\Bz(t) = 2\Bz(t)^T \dot{\Bz}(t) \label{eq:critical_points}\\
&= 2\prl{\BP \BV e^{\BJ (t-t_0)} \BV^{-1} \Bs_0}^T \!\!\BQ \prl{\BP \BV \BJ e^{\BJ (t-t_0)} \BV^{-1} \Bs_0}.\notag
\end{align}
In general, an exact solution may be hard to compute due to the complicated expression of $e^{\BJ t}$.
\subsubsection{Approximate solution} 
\label{sec:traj_est_upper_bound}
When an exact solution to~\eqref{eq:eta_def} is hard to obtain, we may instead compute a tight upper bound on $\eta(t_0)$. Given a $\BU \in \mathbb{S}^{n_s}_{>0}$,
\begin{equation}
\label{eq:invariant_ellipsoid}
\CE_{inv}  \coloneqq \crl{\Bxi \in \mathbb{R}^{n_s} \mid \Bxi^T \BU \Bxi \leq 1} 
\end{equation} 
is an invariant ellipsoid for the robot dynamics~\eqref{eq:LTI_closed_loop}, i.e., $\Bs(t) \in \CE_{inv}$ for all $t \geq t_0$. Instead of finding the peak value of $\|\Bz(t)\|^2$ along the state trajectory, we can compute it over the invariant ellipsoid $\CE_{inv}$. Since $\CE_{inv}$ contains the system trajectory, we have for all $t \geq t_0$:
\begin{equation}
\label{eq:relax_relation}
\norm{\Bz(t)}^2 \leq \eta(t_0) \leq \max_{\Bxi \in \CE_{inv}} \Bxi^T \BC^T \BC \Bxi  
\end{equation}
Obtaining the upper bound above is equivalent to solving the following semi-definite program~\cite[Ch.6]{boyd_LMI_book}:
\begin{equation}
\begin{aligned} \label{eq:SDP_UB}
& \underset{\BU, \delta}{\text{minimize}} & &  \delta  \\
& \text{subject to}
& &  \bar{\BA}^T \BU + \BU\bar{\BA} \preceq \Bzero, \;\; \Bs_0^T \BU \Bs_0 \leq 1 \\
&  & & \begin{bmatrix}
\BU 	& \BC^T \\
\BC 	& \delta \BI
\end{bmatrix} \succeq \Bzero ,\quad\;\;   \BU \succ \Bzero. 
\end{aligned}
\end{equation}

\begin{lemma}
\label{lemma:traj_bound}
For any initial condition $\Bs_0$ and associated constant directional matrix $\BQ$ in \eqref{eq:directional_matrix_by_IC}, the trajectory $\rp(t)$ under system dynamics~\eqref{eq:LTI_closed_loop} admits a tight ellipsoid bound, $\rp(t) \in \CE_\BQ(\Bzero, \eta(t_0)) \subseteq \CE_\BQ(\Bzero, \delta(t_0))$, for all $t \geq t_0$, where $\eta(t_0)$ is the solution to~\eqref{eq:eta_def} and $\delta(t_0)$ is the solution to~\eqref{eq:SDP_UB}.
\end{lemma}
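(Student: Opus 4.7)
The proof naturally splits along its two containments. The inner one, $\rp(t) \in \CE_\BQ(\Bzero, \eta(t_0))$, is essentially a restatement of the definition \eqref{eq:eta_def}: since $\BQ$ is fixed in time (determined by $\Bx_0$ alone via \eqref{eq:directional_matrix_by_IC}) and $\eta(t_0) = \max_{t \geq t_0}\|\rp(t)\|_\BQ^2$, the inequality $\|\rp(t)\|_\BQ^2 \leq \eta(t_0)$ holds for every $t \geq t_0$, which is exactly the defining inequality of $\CE_\BQ(\Bzero, \eta(t_0))$. So this half requires nothing beyond unpacking the definition.

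For the outer containment, the plan is to establish $\eta(t_0) \leq \delta(t_0)$ through the standard invariant-ellipsoid argument motivating \eqref{eq:SDP_UB}. Let $(\BU^{\star},\delta(t_0))$ denote an optimizer of the SDP and consider $V(\Bs) \coloneqq \Bs^T \BU^{\star} \Bs$. The Lyapunov LMI $\bar{\BA}^T \BU^{\star} + \BU^{\star} \bar{\BA} \preceq \Bzero$ gives $\dot V(\Bs(t)) \leq 0$ along trajectories of \eqref{eq:LTI_closed_loop}, so $V$ is non-increasing, and combined with $\Bs_0^T \BU^{\star} \Bs_0 \leq 1$ this yields $V(\Bs(t)) \leq 1$, i.e., $\Bs(t) \in \CE_{inv}$ for all $t \geq t_0$, with $\CE_{inv}$ as in \eqref{eq:invariant_ellipsoid} instantiated by $\BU^{\star}$.

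Next I would convert the third LMI into a scalar bound on the output peak via the Schur complement: because $\BU^{\star} \succ \Bzero$, it is equivalent to $\delta(t_0) \BI \succeq \BC (\BU^{\star})^{-1} \BC^T$. A change of variables $\Bw = (\BU^{\star})^{1/2} \Bxi$ then produces
\[
\max_{\Bxi \in \CE_{inv}} \Bxi^T \BC^T \BC \Bxi \;=\; \lambda_{\max}\!\bigl(\BC (\BU^{\star})^{-1} \BC^T\bigr) \;\leq\; \delta(t_0),
\]
and since $\|\rp(t)\|_\BQ^2 = \|\Bz(t)\|^2 = \Bs(t)^T \BC^T \BC \Bs(t)$ while $\Bs(t) \in \CE_{inv}$, taking the supremum over $t \geq t_0$ gives $\eta(t_0) \leq \delta(t_0)$. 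Because both ellipsoids share the center $\Bzero$ and shape matrix $\BQ$, this is exactly the nesting $\CE_\BQ(\Bzero, \eta(t_0)) \subseteq \CE_\BQ(\Bzero, \delta(t_0))$ claimed.

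The argument is a template application of LMI-based output-peak bounds \cite[Ch.~6]{boyd_LMI_book}, so I do not anticipate a real obstacle; the only point demanding mild care is verifying that the three LMI constraints in \eqref{eq:SDP_UB} act in concert—strict definiteness of $\BU^{\star}$ is what enables the Schur complement, the Lyapunov LMI supplies invariance, and the initial-condition LMI guarantees that the trajectory actually starts (and therefore remains) inside the invariant set over which the output peak is maximized.
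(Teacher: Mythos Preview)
Your proposal is correct and follows the same approach as the paper. The paper's own proof is terser: it simply asserts (citing \cite[Ch.~6]{boyd_LMI_book}) that the optimal value of~\eqref{eq:SDP_UB} equals $\max_{\Bxi \in \CE_{inv}}\Bxi^T \BC^T \BC \Bxi$ and then invokes~\eqref{eq:relax_relation}, whereas you unpack why the three LMI constraints jointly yield invariance and the Schur-complement output bound---but the underlying logic is identical.
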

\begin{proof}
By definition, $\Bx(t) \in \CE_\BQ(\Bzero, \eta(t_0))$ is equivalent to $d^2_\BQ \prl{\Bzero, \Bx(t)} \leq  \eta(t_0)$. Since $\delta(t_0) =  \max_{\Bxi \in \CE_{inv}}\Bxi^T \BC^T \BC \Bxi$, inequality~\eqref{eq:relax_relation} yields
$\delta(t_0) \geq  \eta(t_0) \geq \norm{\Bz(t)}^2 = \norm{\Bx(t)}_\BQ^2 = d^2_\BQ \prl{\Bzero, \Bx(t)}$. Hence, $\Bx(t)\!\in\! \CE_\BQ(\Bzero, \eta(t_0)) \!\subseteq\!\CE_\BQ(\Bzero, \delta(t_0))$.\qedhere
\end{proof}
Now, we know how to find an accurate outer approximation of the system trajectory in the SDDM defined by~\eqref{eq:directional_matrix_by_IC}. We are ready to develop a feedback controller that utilizes the trajectory bounds to quantify the safety of the system with respect to surrounding obstacles, while following the navigation path towards the goal.
\subsection{Structure of the Robot-Governor Controller}
\label{sec:rgs_intro}
The problem of collision checking is simple for first-order kinematic systems since they can stop instantaneously to avoid collisions. We introduce a \emph{reference governor}~\cite{garone2016_ERG, kolmanovsky2014ref_cmd_gov}, a virtual first-order system:
$\dgp = \Bu_\gp$ with state $\gp \in \Rn$ and control input $\Bu_\gp \in \Rn$, which will serve to simplify the conditions for maintaining stability and safety concurrently. Our proposed structure of a path-following control design is shown in Fig.~\ref{fig:structure_safe_tracking}. The reference governor behaves as a real-time reactive trajectory generator that continuously regulates a reference signal for the real robot dynamics depending on risk level evaluation using SDDM. More precisely, we choose the real system's control input $\Bu$ so that the robot tracks the governor state $\gp$, while $\gp$ is regulated via $\Bu_\gp$ to ensure collision avoidance and stability for the joint robot-governor system.

\begin{figure}[t]
	\centering
	\includegraphics[width=1\linewidth,valign=t]{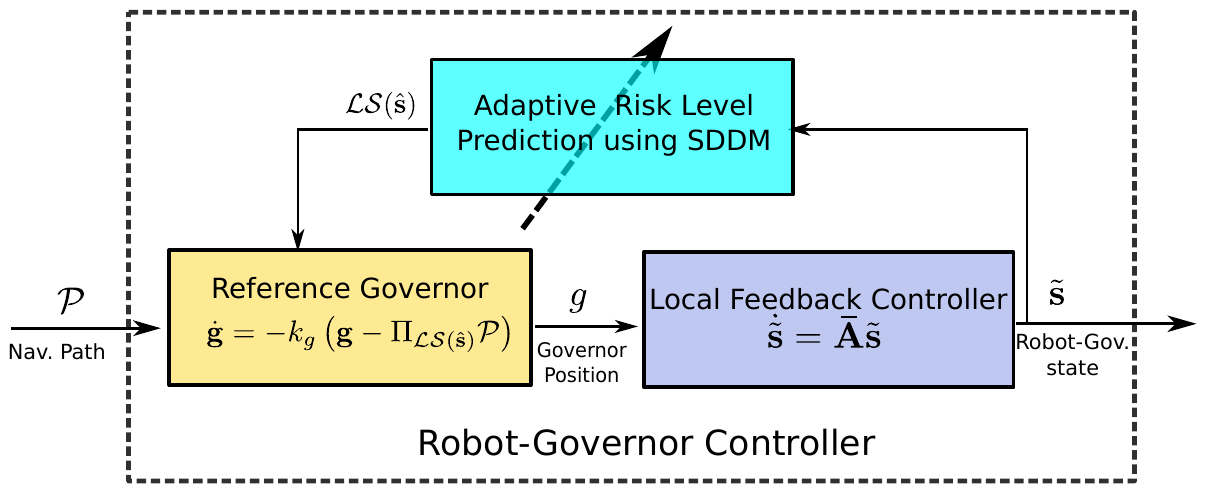}
	\caption{Structure of the proposed controller. A virtual reference governor adaptively conveys global navigation information to a local feedback controller based on the prediction of robot position trajectory. A local safe zone $\LS(\hat{\Bs})$, depending only on the current system state $\hat{\Bs}$, is constructed from an SDDM trajectory bound. A time-varying local goal $\Pi_{\LS(\hat{\Bs})} \Br$ is obtained by projecting the navigation path $\Br$ onto the local safe zone $\LS(\hat{\Bs})$. The governor chases the local goal and continuously sends its updated state $\gp$ as a reference signal to guide the local controller.
	}
	\label{fig:structure_safe_tracking}
\end{figure}

In detail, let $\tilde{\Bs} \coloneqq \Bs - \BP^T\Bg$ be the system state with the first element changed from $\Bx$ to $(\Bx - \gp)$ to make $\prl{\gp,\mathbf{0}}$ an equilibrium point. Choose a local controller for~\eqref{eq:lin_system} that tracks the governor state $\gp$:
\begin{equation}
\label{eq:local_feedback_controller}
\Bu = - \BK\, \tilde{\Bs}.
\end{equation}
Consider the augmented \emph{robot-governor system} with state $\hat{\Bs} = \prl{\tilde{\Bs}, \gp} \in \mathbb{R}^{(n_s + n)}$, coupling the real states with the governor state:
\begin{equation}
\label{eq:rgs_sum}
\dot{\hat{\Bs}}  = 
\begin{bmatrix}
\dot{\tilde{\Bs}} \\ \dgp
\end{bmatrix} = 
\begin{bmatrix}
	\bar{\BA} \tilde{\Bs} \\ \Bu_\Bg
\end{bmatrix}.
\end{equation}
Before proposing the design of the governor controller $\Bu_\gp(t)$, we analyze the behavior of the robot-governor system in the case of static governor.
\begin{lemma}
\label{lem:lyap_E}
If the governor is static, i.e., $\Bu_\gp(t)\equiv 0$ so that $\gp(t) \equiv \gp_0$, then the robot-governor system in~\eqref{eq:rgs_sum} is globally exponentially stable with respect to the equilibrium $\prl{\gp_0, \Bzero, \gp_0}$.
\end{lemma}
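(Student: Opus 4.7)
The plan is to exploit the decoupling that $\Bu_\gp \equiv 0$ induces between the governor coordinate and the tracking-error coordinate, reducing the stability analysis of the augmented robot-governor system in \eqref{eq:rgs_sum} to that of a linear time-invariant system driven by the Hurwitz matrix $\bar{\BA}$.

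First, I would note that $\dgp \equiv 0$ integrates to $\gp(t) \equiv \gp_0$, so the $\gp$-component of $\hat{\Bs}$ is constant for all time. The dynamics of $\hat{\Bs}$ therefore collapse to the autonomous subsystem $\dot{\tilde{\Bs}} = \bar{\BA}\tilde{\Bs}$, with the second block frozen. Checking that $\hat{\Bs}^\star := (\Bzero, \gp_0)$ is an equilibrium is then immediate: $\tilde{\Bs} = \Bzero$ is a fixed point of the linear system, and in original coordinates this equilibrium reads $(\Bx, \By, \gp) = (\gp_0, \Bzero, \gp_0)$, matching the statement.

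The core of the argument is then standard Lyapunov analysis for a Hurwitz LTI system. Since $\bar{\BA} = \BA - \BB\BK$ is Hurwitz, the converse Lyapunov theorem furnishes $\BU \in \pdm$ with $\bar{\BA}^T \BU + \BU \bar{\BA} = -\BI$. Taking $V(\hat{\Bs}) := \tilde{\Bs}^T \BU \tilde{\Bs}$ as a candidate for the augmented system and differentiating along trajectories of \eqref{eq:rgs_sum} gives $\dot V = -\|\tilde{\Bs}\|^2 \leq -\lambda_{\max}(\BU)^{-1} V$, so $V$ decays exponentially at rate $\lambda_{\max}(\BU)^{-1}$. Sandwiching $V$ by $\lambda_{\min}(\BU)\|\tilde{\Bs}\|^2 \leq V \leq \lambda_{\max}(\BU)\|\tilde{\Bs}\|^2$ converts this into $\|\tilde{\Bs}(t)\| \leq \sqrt{\lambda_{\max}(\BU)/\lambda_{\min}(\BU)}\, e^{-t/(2\lambda_{\max}(\BU))}\,\|\tilde{\Bs}(0)\|$.

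The only subtlety, and the one place that needs a sentence of justification rather than a calculation, is that $V$ depends only on the $\tilde{\Bs}$-block of $\hat{\Bs}$ and is thus only positive semi-definite as a function on $\mathbb{R}^{n_s+n}$. Global exponential stability with respect to the full equilibrium $\hat{\Bs}^\star$ nonetheless follows because the governor state is pinned at $\gp_0$, so $\|\hat{\Bs}(t) - \hat{\Bs}^\star\| = \|\tilde{\Bs}(t)\|$ for every $t \geq t_0$, and the decay bound above transfers verbatim to the full state. (Alternatively, one can skip Lyapunov and simply invoke $\tilde{\Bs}(t) = e^{\bar{\BA}(t-t_0)}\tilde{\Bs}(t_0)$ with the standard exponential bound on $\|e^{\bar{\BA}\tau}\|$ for a Hurwitz matrix, yielding the same conclusion.) No serious obstacle is anticipated.
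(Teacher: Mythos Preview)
Your proposal is correct and follows essentially the same approach as the paper: both exploit the decoupling under $\Bu_\gp \equiv 0$, freeze $\gp$ at $\gp_0$, and reduce the claim to global exponential stability of the LTI subsystem $\dot{\tilde{\Bs}} = \bar{\BA}\tilde{\Bs}$ with $\bar{\BA}$ Hurwitz. The paper dispatches this in a single sentence by citing the Hurwitz property, whereas you spell out the standard Lyapunov (or matrix-exponential) argument behind that citation; there is no substantive difference.
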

\begin{proof}
The subsystem $\dot{\tilde{\Bs}} = \bar{\BA} \tilde{\Bs}$ has an equilibrium at $\prl{\gp_0, \Bzero}$, which is globally exponentially stable because $\bar{\BA}$ is Hurwitz, while $\gp(t) \equiv \gp_0$ by assumption.
\end{proof}

In addition to guaranteeing stability for a static governor, we can use the ellipsoidal trajectory bounds from Lemma~\ref{lemma:traj_bound} to ensure safety.

\begin{theorem}
\label{thm:safety_static_gov}
Let $(\Bx_0 - \gp_0, \By_0, \gp_0)$ be any initial state for the robot-governor system in~\eqref{eq:rgs_sum} with $\Bx_0, \gp_0 \in \CF$. Suppose that $\Bu_\gp(t)\equiv \Bzero$ so that $\gp(t) \equiv \gp_0$. Let $\BQ \coloneqq \BQ \brl{\gp_0 - \Bx_0} \in \pdm$ be a constant directional matrix and suppose that the following safety condition is satisfied:
\begin{equation}
\label{eq:safety_condition}
\delta(t_0) \leq d_{\BQ}^2(\gp_0, \CO),
\end{equation}
where $\delta(t_0)$ is an upper bound for $\|\rp(t)-\gp_0\|_{\BQ}^2$ obtained according to Lemma \ref{lemma:traj_bound}. Then, the robot-governor system is globally exponentially stable with respect to the equilibrium $(\gp_0, \Bzero, \gp_0)$ and, moreover, the robot trajectory is collision free, i.e., $\rp(t) \in \CF$, for all $t \geq t_0$.
\end{theorem}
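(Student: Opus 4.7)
The plan is to split the argument into two independent pieces: (i) global exponential stability, which follows essentially by invoking Lemma~\ref{lem:lyap_E} verbatim, and (ii) forward invariance of the free space $\CF$, which reduces to showing that the ellipsoidal trajectory bound from Lemma~\ref{lemma:traj_bound} is contained in an obstacle-free ellipsoid centered at $\gp_0$.

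For stability, I would note that $\Bu_\gp(t)\equiv\Bzero$ implies $\gp(t)\equiv\gp_0$, and then the hypotheses of Lemma~\ref{lem:lyap_E} are satisfied for the augmented system~\eqref{eq:rgs_sum} with equilibrium $(\gp_0,\Bzero,\gp_0)$. Exponential convergence of $\tilde{\Bs}$ to the origin is immediate since $\bar{\BA}$ is Hurwitz, and $\gp$ is trivially at its equilibrium value. This step is essentially a citation.

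The safety part is where the theorem's content lies. First, I would perform the change of coordinates $\tilde{\Bx}(t)=\rp(t)-\gp_0$ so that the subsystem $\dot{\tilde{\Bs}}=\bar{\BA}\tilde{\Bs}$ is in the same form as~\eqref{eq:LTI_closed_loop}, with $\tilde{\Bx}_0=\Bx_0-\gp_0$ and the directional matrix $\BQ=\BQ[\gp_0-\Bx_0]=\BQ[-\tilde{\Bx}_0]$. Note from~\eqref{eq:directional_matrix} that $\BQ[\Bv]$ depends only on the rank-one term $\Bv\Bv^T/\norm{\Bv}^2$, so $\BQ[-\tilde{\Bx}_0]=\BQ[\tilde{\Bx}_0]$, matching the construction in~\eqref{eq:directional_matrix_by_IC} applied to the shifted system. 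Applying Lemma~\ref{lemma:traj_bound} in the shifted coordinates gives
\begin{equation*}
\tilde{\Bx}(t)\in \CE_{\BQ}(\Bzero,\delta(t_0))\quad\Longleftrightarrow\quad \rp(t)\in\CE_{\BQ}(\gp_0,\delta(t_0))\quad\text{for all } t\geq t_0.
\end{equation*}

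The final step uses the safety condition. By definition of $d_{\BQ}(\gp_0,\CO)$, no obstacle point lies strictly inside the ellipsoid $\CE_{\BQ}\bigl(\gp_0,d_{\BQ}^2(\gp_0,\CO)\bigr)$, i.e., this ellipsoid intersects $\CO$ only possibly on its boundary, so its interior is contained in $\CF$. The hypothesis $\delta(t_0)\leq d_{\BQ}^2(\gp_0,\CO)$ yields the nesting $\CE_{\BQ}(\gp_0,\delta(t_0))\subseteq \CE_{\BQ}\bigl(\gp_0,d_{\BQ}^2(\gp_0,\CO)\bigr)$, and therefore $\rp(t)\in\CF$ for all $t\geq t_0$. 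The only mildly delicate point I anticipate is justifying cleanly that $\CE_{\BQ}(\gp_0,\delta(t_0))\subseteq\CF$ given that the infimum defining $d_{\BQ}(\gp_0,\CO)$ may be attained on $\partial\CO$; this is handled either by noting that $\Bx_0\in\CF$ forces $\delta(t_0)<d_{\BQ}^2(\gp_0,\CO)$ strictly whenever the bound is tight enough, or more simply by treating the boundary of $\CO$ as non-problematic because $\CO$ is closed and $\CF$ is open only on the interior — the theorem's statement $\rp(t)\in\CF$ then follows from the strict inclusion $\CE_{\BQ}(\gp_0,\delta(t_0))\cap\CO=\emptyset$ implied by the definition of $d_{\BQ}$.
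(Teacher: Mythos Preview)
Your proposal is correct and follows essentially the same route as the paper: invoke Lemma~\ref{lem:lyap_E} for stability, apply Lemma~\ref{lemma:traj_bound} in the $\gp_0$-centered coordinates to obtain $\rp(t)\in\CE_{\BQ}(\gp_0,\delta(t_0))$, and then use the nesting $\CE_{\BQ}(\gp_0,\delta(t_0))\subseteq\CE_{\BQ}(\gp_0,d_{\BQ}^2(\gp_0,\CO))\subseteq\CF$. Your treatment is actually more careful than the paper's on two points---the explicit observation that $\BQ[-\tilde{\Bx}_0]=\BQ[\tilde{\Bx}_0]$ and the boundary subtlety for the outer ellipsoid---both of which the paper simply elides.
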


\begin{proof}
Since $(\Bg_0,\Bzero)$ is an equilibrium for $\dot{\tilde{\Bs}} = \bar{\BA} \tilde{\Bs}$, by Lemma~\ref{lemma:traj_bound}, $\rp(t) \in \CE_\BQ \prl{\gp_0, \delta(t_0)}$ for all $t \geq t_0$. From the safety condition in~\eqref{eq:safety_condition}, $\rp(t) \in \CE_\BQ \prl{\gp_0, \delta(t_0)} \subseteq \CE_\BQ(\gp_0, d^2_{\BQ} (\gp_0, \CO)) \subseteq \CF$ for all $t\geq t_0$. Stability is ensured by Lemma~\ref{lem:lyap_E}.
\end{proof}
\subsection{Local Projected Goal and Governor Control Policy}
\label{sec:rgs_lpg_ug}
We established that the robot-governor system is stable and safe as long as the governor is static and \eqref{eq:safety_condition} holds. Next, we consider how to move the governor without violating these properties. Based on~\eqref{eq:local_feedback_controller}, we know that the robot will attempt to track the governor state. The main idea is to choose a time varying directional matrix $\BQ(t)  = \BQ \brl{\gp(t) - \rp(t)}$ to measure the system's safety. Since $\BQ(t)$ is positive definite (by Lemma~\ref{lem:directional_metric}), it can still be used to define an SDDM $\|\cdot\|_{\BQ(t)}$. Then, Lemma~\ref{lemma:traj_bound}, can still provide an accurate robot trajectory bound $\CE_{\BQ(t)} \prl{\gp(t), \delta(t)}$, which takes the robot's direction of motion into account. We need to design the governor control policy $\Bu_\gp(t)$ so that $\delta(t)$ never violates a time-varying version of the safety condition in~\eqref{eq:safety_condition}.

Our approach is to define an ellipsoid $\LS(\hat{\Bs})$, called a \emph{local safe zone}, centered at the governor state $\gp$, and have the size of $\LS(\hat{\Bs})$ determine how fast the governor can move. In the worst case, if system safety or stability are endangered, $\LS(\hat{\Bs})$ should shrink to a point, forcing the governor to remain static. Once there is enough leeway in the safety conditions in~\eqref{eq:safety_condition}, $\LS(\hat{\Bs})$ can grow, allowing the governor to move without endangering the safety or stability. 

\begin{definition}
\label{def:LS}
A \emph{local safe zone} is a time-varying set that at time $t$ depends on the robot-governor state $\hat{\Bs}(t)$ as follows:
\begin{equation}
\label{eq:LS}
\LS(\hat{\Bs}) \coloneqq \crl{ \Bq \in \CF \mid d_{\BQ}^2(\Bq, \gp) \leq  \max \prl{0, \DeltaE(\hat{\Bs})} },
\end{equation}
where $\BQ(t) \coloneqq \BQ \brl{\gp(t)-\rp(t)}$ is a directional matrix, determined by $\hat{\Bs}(t)$, and $\DeltaE(\hat{\Bs}(t)) \coloneqq d_{\BQ(t)}^2(\gp(t), \mathcal{O}) - \delta(t)$ is a measure of leeway to safety violation, determined by an upper bound $\delta(t)$ on $\|\Bx(\tau)-\Bg(\tau)\|_{\BQ(t)}$ for all $\tau \geq t$ and the directional distance $d_{\BQ(t)}^2(\gp(t), \mathcal{O})$ from the governor to the nearest obstacle.
\end{definition}
The term $\DeltaE(\hat{\Bs})$ estimates safety of the system based on local environment geometry and robot activeness. The requirement $\DeltaE(\hat{\Bs}) \geq 0$ only places a constraint on the magnitude $\norm{\dgp}$, so $\dot{\gp}/ \norm{\dgp}$ is a degree of freedom that can be utilized to make $\gp$ asymptotically tend to a desired goal. We define a local goal for the governor.

\begin{definition}
\label{def:LPG}
A \emph{local projected goal} is the farthest point along the path $\Br$ contained in the local safe zone $\LS(\hat{\Bs})$:
\begin{equation}
\label{eq:local_projected_goal}
\alpha^* \coloneqq \max_{\alpha \in [0,1]} \crl{ \alpha \mid  \Br(\alpha) \in \LS(\hat{\Bs})} \quad \lpg(\hat{\Bs}) = \Br(\alpha^*)
\end{equation}
\end{definition}
The informal notation $\lpg(\hat{\Bs}) = \Pi_{\LS(\hat{\Bs})} \Br$ will be used for the local projected goal to emphasize that $\lpg(\hat{\Bs})$ is determined by projecting the path $\Br$ onto the local safe zone $\LS(\hat{\Bs})$. 
The structure of the complete closed-loop control policy is illustrated in Fig.~\ref{fig:structure_safe_tracking}, while the definitions of a local safe zone and a local projected goal are visualized in Fig.~\ref{fig:corridor_snapshot}.
We are finally ready to define the \emph{governor control policy}:
\begin{equation}
\label{eq:move2proj_goal}
\Bu_\gp \coloneqq -k_g \prl{ \gp - \lpg(\hat{\Bs}) } 
\end{equation}
where $k_g > 0$ is a control gain for the governor controller. We prove that the closed-loop system
is stable, safe, and asymptotically reaches the goal specified by the path $\Br$. We also informally claim that the path-following controller is fast due to the use of directional information for safety verification. This claim is supported empirically in Sec.~\ref{sec:evaluation}.

\begin{theorem}
\label{thm:safe_following}
Given a path $\Br$, the closed-loop robot-governor system~\eqref{eq:rgs_sum} with move-to-projected-goal~\eqref{eq:move2proj_goal} is asymptotically steered from any safe initial state $\hat{\Bs}_0$, i.e., $\DeltaE(\hat{\Bs}_0) > 0$ and $\Br(\alpha) \in \LS(\hat{\Bs}_0)$ for some $\alpha \in [0,1]$, to the goal state $\hat{\Bs}^* = (\Br(1), \Bzero, \Br(1))$ and the robot trajectory is collision-free for all time, i.e., $\rp(t) \in \CF$ for all $t \geq 0$.
\end{theorem}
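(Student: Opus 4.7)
My plan is to decompose the claim into two parts: (i) forward invariance of safety, $\rp(t) \in \CF$ for all $t \geq 0$, and (ii) asymptotic convergence of the closed-loop robot-governor system \eqref{eq:rgs_sum}--\eqref{eq:move2proj_goal} to $\hat{\Bs}^{*} = (\Br(1), \Bzero, \Br(1))$. I would reduce the time-varying governor to an instantaneous static-governor situation at each moment, so that Theorem~\ref{thm:safety_static_gov}, Lemma~\ref{lem:lyap_E}, and the trajectory bound of Lemma~\ref{lemma:traj_bound} can be invoked repeatedly.

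For (i), I will show the set $\{\hat{\Bs} \mid \DeltaE(\hat{\Bs}) \geq 0 \text{ and } \Br(\alpha)\in \LS(\hat{\Bs}) \text{ for some } \alpha\}$ is forward invariant. The structural observation is that, by Definition~\ref{def:LPG}, $\lpg(\hat{\Bs}) \in \LS(\hat{\Bs})$, hence $d_{\BQ}(\gp,\lpg) \leq \sqrt{\max(0,\DeltaE(\hat{\Bs}))}$, so $\Bu_\gp = -k_g(\gp-\lpg)$ automatically vanishes as the leeway vanishes. I would then argue by contradiction: if $t^{*}$ is the first instant at which $\DeltaE(\hat{\Bs}(t^{*})) = 0$, then $\Bu_\gp(t)\to \Bzero$ as $t \uparrow t^{*}$, so the governor is effectively frozen in a neighborhood of $t^{*}$; applying Theorem~\ref{thm:safety_static_gov} on that neighborhood certifies $\delta(t) \leq d_{\BQ(t)}^{2}(\gp(t), \CO)$ strictly, contradicting $\DeltaE(\hat{\Bs}(t^{*})) = 0$. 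Safety $\rp(t) \in \CF$ then follows from the chain of inclusions $\rp(t) \in \CE_{\BQ(t)}(\gp(t),\delta(t)) \subseteq \CE_{\BQ(t)}(\gp(t), d_{\BQ(t)}^{2}(\gp(t),\CO)) \subseteq \CF$ guaranteed by Lemma~\ref{lemma:traj_bound}.

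For (ii), I plan to apply a LaSalle-type argument to the Lyapunov candidate $V(\hat{\Bs}) = \tilde{\Bs}^{T} \BP_V \tilde{\Bs} + \tfrac{1}{2} \|\gp - \lpg(\hat{\Bs})\|^{2}$, where $\BP_V \succ \Bzero$ solves $\bar{\BA}^{T}\BP_V + \BP_V \bar{\BA} \preceq \Bzero$ (which exists because $\bar{\BA}$ is Hurwitz). The first term decays as the robot tracks the governor under the stable closed-loop dynamics in Lemma~\ref{lem:lyap_E}; the second decays by the governor law \eqref{eq:move2proj_goal}. The largest invariant set in $\{\dot V = 0\}$ consists of states with $\tilde{\Bs} = \Bzero$ and $\gp = \lpg(\hat{\Bs})$, so $\gp$ lies on $\Br$ at some $\alpha^{\infty} = \lim_{t \to \infty}\alpha^{*}(t)$. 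Excluding $\alpha^{\infty} < 1$ then completes the proof: in that case $\Br(\alpha^{\infty}) \in \mathring{\CF}$, so $d_{\BQ}^{2}(\Br(\alpha^{\infty}), \CO) > 0$ and $\DeltaE > 0$ in the limit; by continuity of $\Br$, points $\Br(\alpha^{\infty} + \varepsilon)$ still lie in $\LS(\hat{\Bs})$ for small $\varepsilon > 0$, contradicting that $\alpha^{*}$ has stopped advancing.

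The main obstacle I anticipate is that $\lpg(\hat{\Bs})$, defined via a maximization over $\alpha$, is not guaranteed to be smooth in $\hat{\Bs}$, which complicates using it inside a Lyapunov candidate. I would sidestep this either by resorting to a non-smooth LaSalle variant, or more concretely by showing directly that the scalar $\alpha^{*}(t)$ is monotone nondecreasing along closed-loop trajectories — because $\Br(\alpha^{*}(t))\in \LS(\hat{\Bs}(t))$ and the safe zone only shrinks when the governor decelerates in response — so $\alpha^{*}(t) \uparrow \alpha^{\infty} \in (0,1]$, and the contradiction argument above rules out $\alpha^{\infty} < 1$.
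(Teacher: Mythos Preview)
Your proposal follows essentially the same line as the paper's proof: both reduce safety to the static-governor case of Theorem~\ref{thm:safety_static_gov} via the observation that $\Bu_\gp \to \Bzero$ as $\DeltaE \downarrow 0$, and both invoke LaSalle's principle together with monotone advancement of $\alpha^{*}$ to exclude equilibria other than $\hat{\Bs}^{*}$. Your version is more explicit---you supply a candidate Lyapunov function and correctly flag the non-smoothness of $\lpg$, both of which the paper leaves implicit---but the underlying argument is the same.
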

\begin{proof}
From Thm.~\ref{thm:safety_static_gov}, we know that the robot-governor system~\eqref{eq:rgs_sum} will asymptotically converge to the equilibrium point $(\gp, 0, \gp)$ without collisions if the governor is static. The governor control policy in~\eqref{eq:move2proj_goal} allows the governor to move only when the interior of $\LS(\hat{\Bs})$ is nonempty. From Def.~\ref{def:LS}, this happens only if the safety condition is strictly satisfied, i.e., $\DeltaE(\hat{\Bs}) = d_{\BQ(t)}^2(\gp(t), \mathcal{O}) -  \delta(t) > 0$. Since $\rp$ approaches $\gp$, $\DeltaE(\hat{\Bs})$ eventually becomes strictly positive and the set $\LS(\hat{\Bs})$ becomes an ellipsoid in free space with non-empty interior. Since initially $\Br(\alpha) \in \LS(\hat{\Bs}_0)$ for some $\alpha \in [0,1]$, the local projected goal in~\eqref{eq:local_projected_goal} will be well defined and when $\LS(\hat{\Bs})$ grows, the projected goal will move further along the path $\Br$, i.e., the path length parameter $\alpha$ will increase. Since the system dynamics are continuous, $\DeltaE(\hat{\Bs})$ cannot suddenly become negative without crossing zero.  If $\DeltaE(\hat{\Bs}) \downarrow 0$, the local energy zone $\LS(\hat{\Bs})$ shrinks to a point, i.e., $\LS(\hat{\Bs}) = \crl{\gp}$, and hence the governor stops moving and waits until the robot catches up. When the governor is static, and since the safety condition in~\eqref{eq:safety_condition} is satisfied, Thm.~\ref{thm:safety_static_gov} again guarantees that the robot can approach the governor without collisions, increasing $\DeltaE(\hat{\Bs})$ in the process. Once $\DeltaE(\hat{\Bs})$ goes above $0$, the governor starts moving towards the goal again by chasing the projected goal. Note that the local projected goal always lies on the navigation path inside the free space, i.e., $\lpg \in \Br \subset \mathring{\CF}$, so the robot-governor system cannot remain stuck at any configuration except $\hat{\Bs}^* = (\Br(1), \Bzero, \Br(1))$. Using LaSalle's Invariance Principle~\cite{khalil2002nonlinear}, we can conclude that the largest invariant set is the point where both the robot and the governor are stationary at the goal location $\Br(1)$.
\end{proof}

\begin{figure}[t]
\centering
\includegraphics[width=0.45\linewidth]{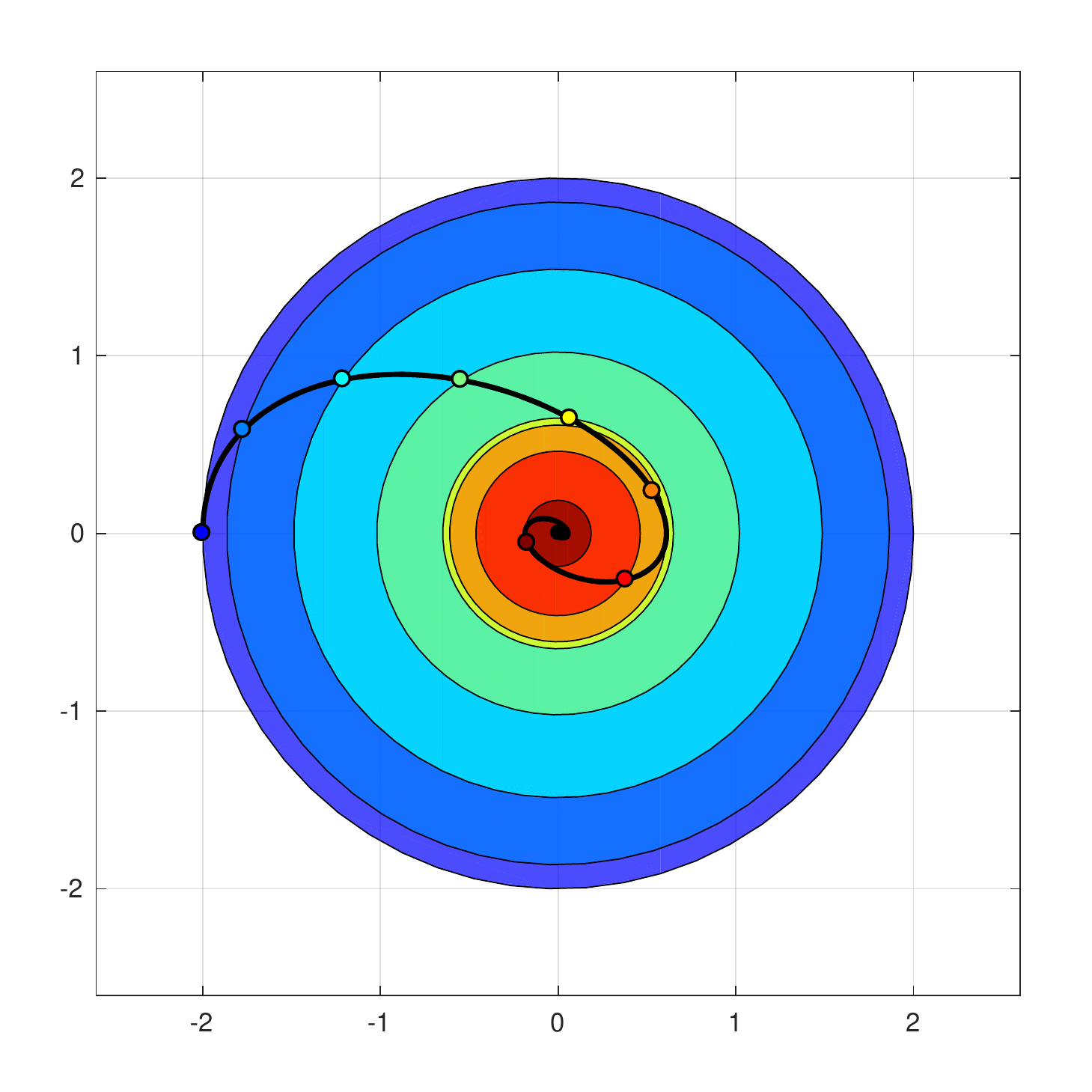}%
\hfill %
\includegraphics[width=0.22\textwidth]{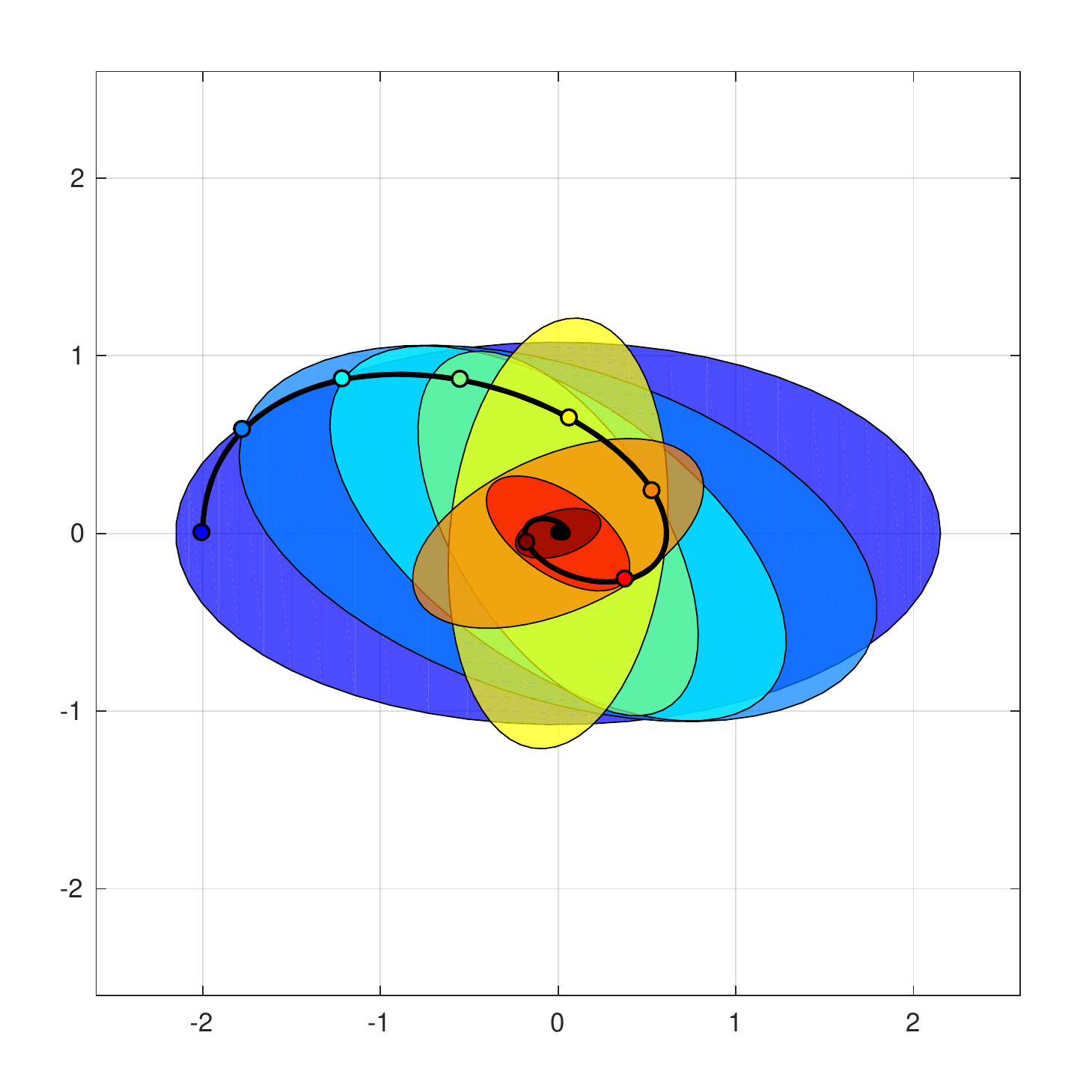}%
\caption{Trajectory bounds comparison between a Euclidean metric (left) and an SDDM (right). The governor is fixed at the origin while the robot's initial conditions are $\rp_0 = (-2, 0)$ and $\rv_0 = (0, 2)$. The change of the trajectory bounds over time is illustrated via ellipsoids with different colors, starting from cold/blue and converging towards warm/red.}
\label{fig:traj_est_comp}
\end{figure}

\begin{figure}[t]
	\centering
	\includegraphics[width=1\linewidth,valign=t]{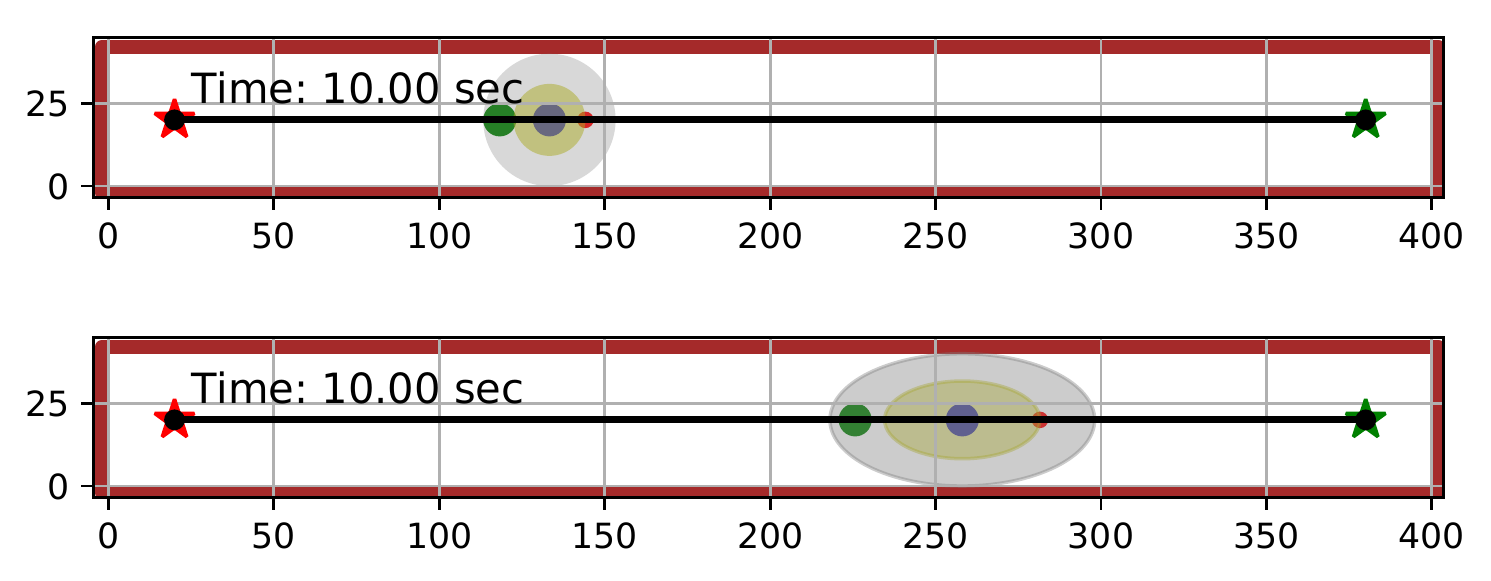}
	\caption{Comparison of controller 1 (top) and 2 (bottom) in a corridor simulation. A snapshot is shown at the same time instant for both controllers. The local energy zone (yellow) resulting from the proposed SDDM trajectory bounds fits the corridor environment well, leading to fast, yet safe, movement.\label{fig:corridor_snapshot}}%
\end{figure}

\begin{figure*}[t]
	\centering
	\includegraphics[width=0.24\linewidth]{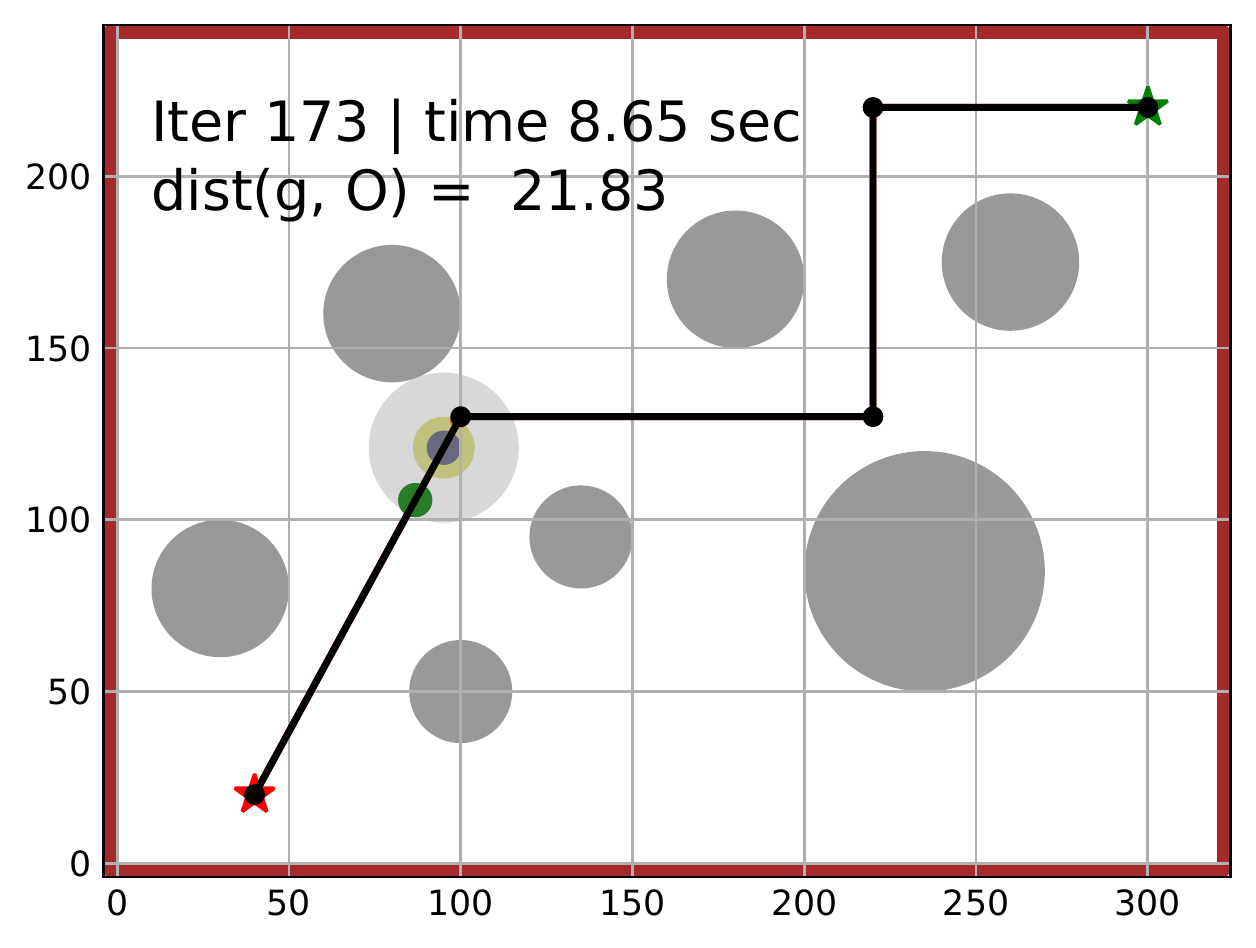}%
	\hfill%
	\includegraphics[width=0.24\linewidth]{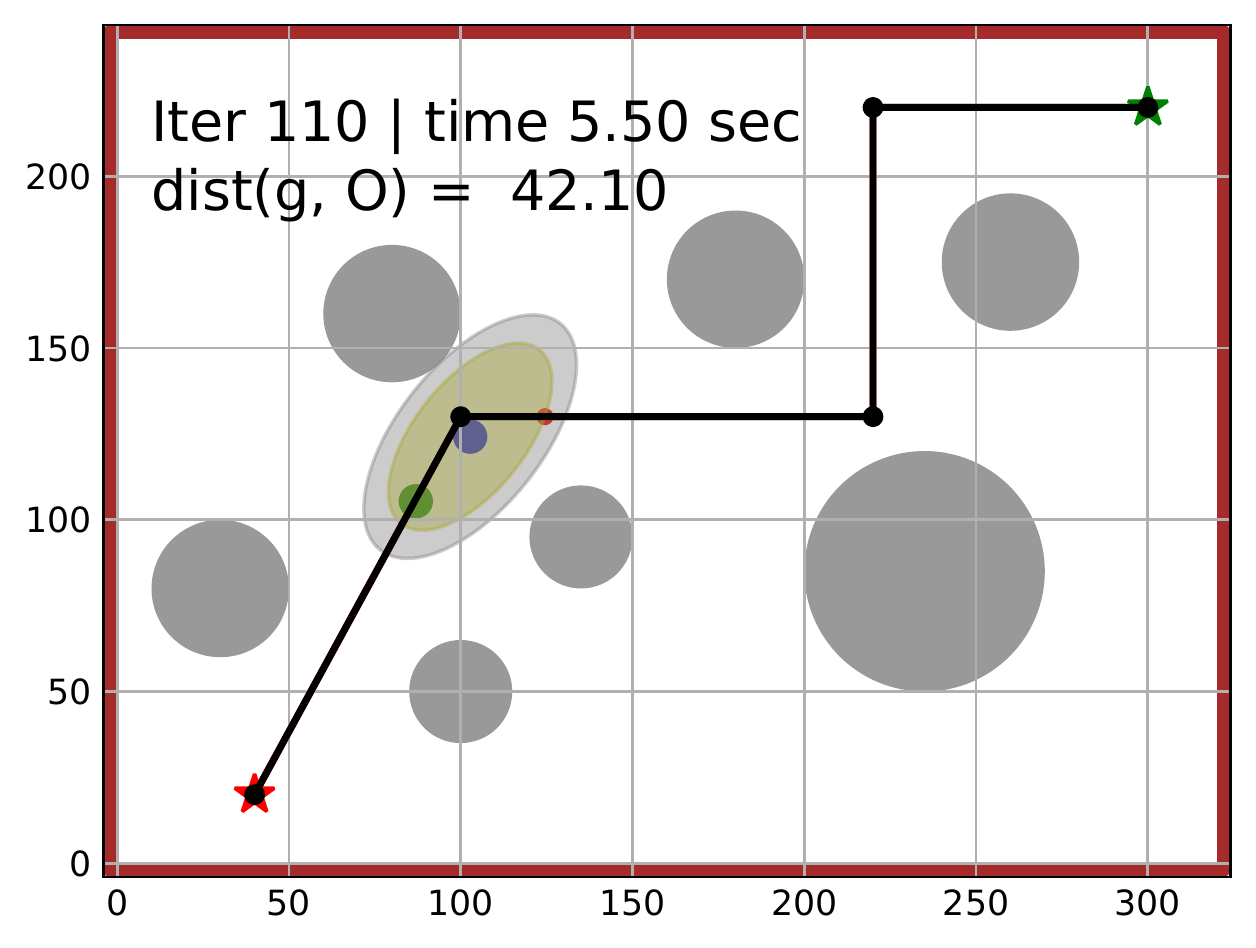}
	\hfill%
	\includegraphics[width=0.24\linewidth]{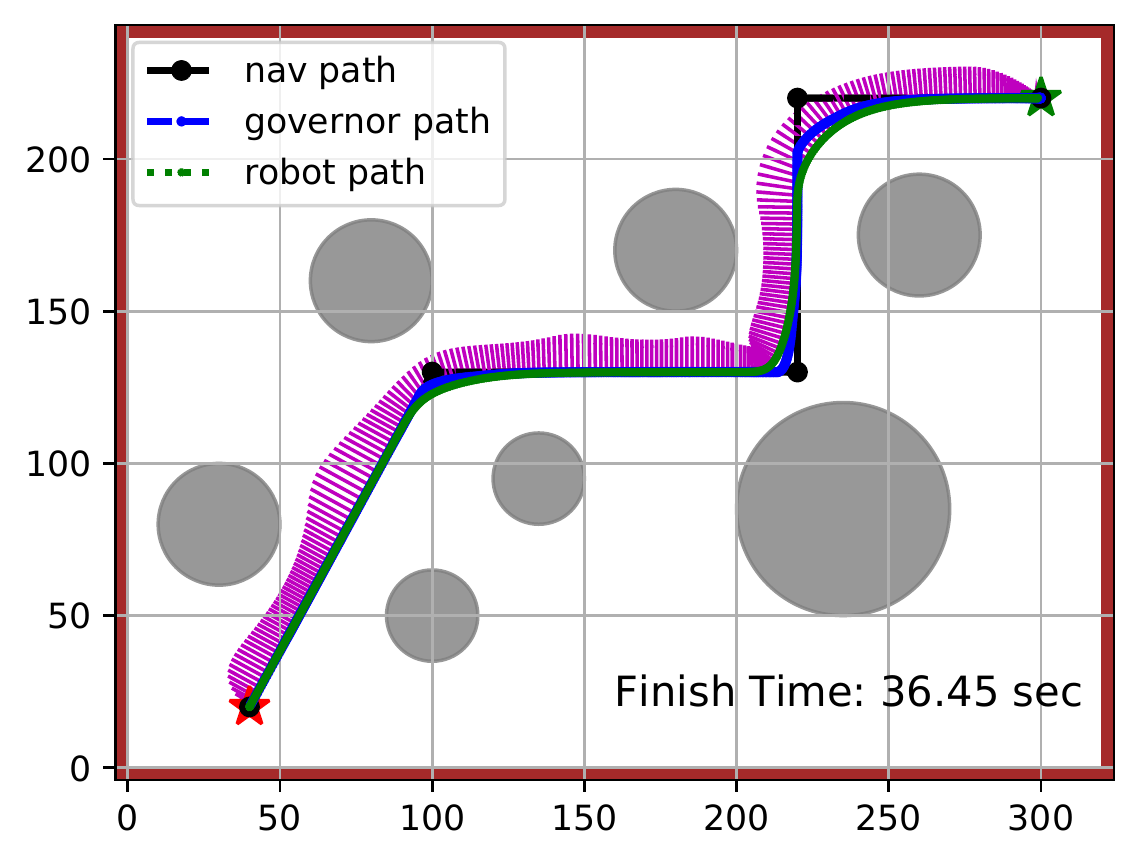}%
	\hfill%
	\includegraphics[width=0.24\linewidth]{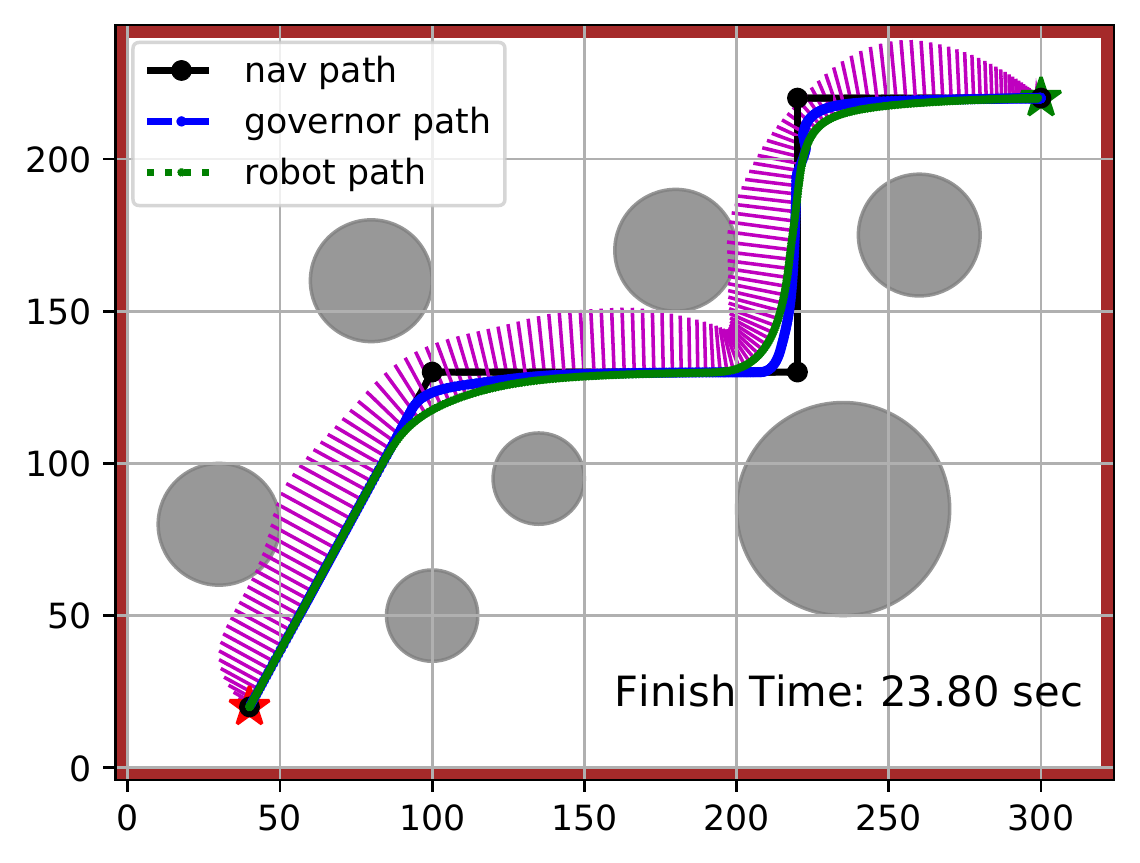}
	\caption{Simulation of the robot-governor system tracking a piecewise-linear path (black) in an environment with circular obstacles (dark gray circles). The start and end of the path are indicated by a red and green star, respectively. The two plots on the left show that the robot at around the same location behaves differently due to different distance measures. The controller using Euclidean distance is overly cautious with respect to lateral obstacles resulting in conservative motion. The system employing SDDM trajectory bounds has a larger local safe zone, which helps the robot turn fast and smoothly. The two plots on the right show the trajectories followed by the systems employing the two controllers. The velocity profiles are shown as magenta arrows perpendicular to the robot path. The controller based on SDDM trajectory bounds (rightmost) results in higher velocities compared to the controller using Euclidean ball invariant sets. Note that the path followed by the robot (green line) is also smoother, especially when turning, for the directional controller despite the higher velocity.}
	\label{fig:static_sparse_sim}
\end{figure*}

\begin{figure}[t]
	\centering
	\includegraphics[width=1\linewidth,valign=t]{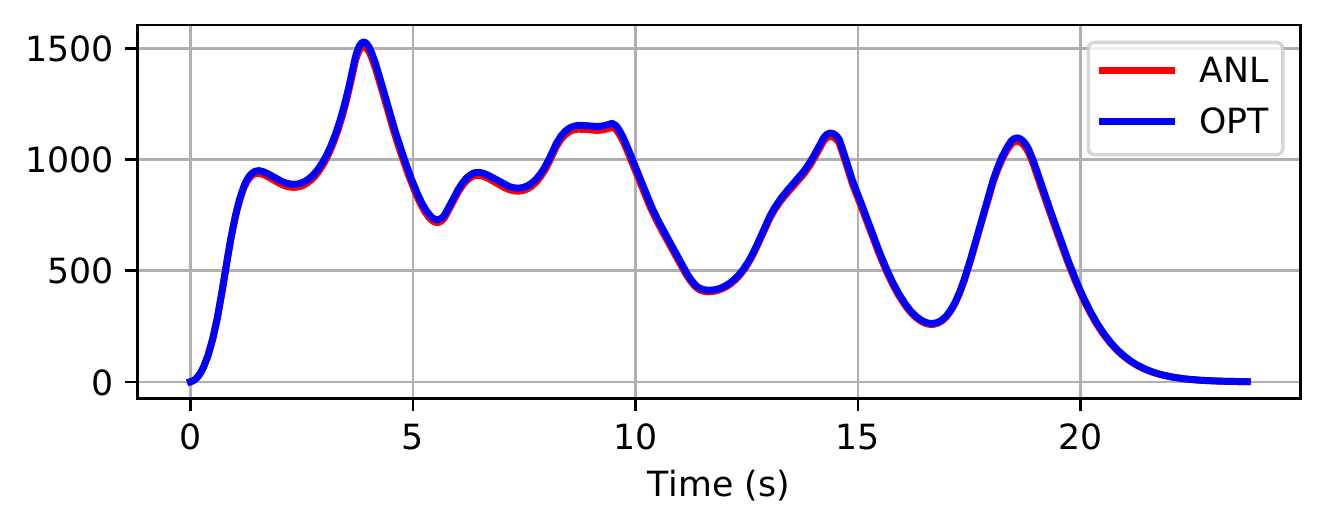}
	\vspace{-2ex}
	\caption{Output peak $\norm{\rp(t)}^2_{\BQ(t)}$ from the trajectory followed in Fig.~\ref{fig:static_sparse_sim}. The red curve is $\eta(t, \BQ(t))$ obtained analytically from eq.~\eqref{eq:critical_points}. The blue curve is $\delta(t)$ computed from the SDP optimization in eq.~\eqref{eq:SDP_UB}.  It is clear that $\delta(t)$ is an upper bound for $\eta(t, \BQ(t))$, and the bound is tight at certain moments. Analytical bounds are used in simulation, while optimization bounds are computed for comparison purpose. }
	\label{fig:Eta_max_comparison}
\end{figure}

\begin{figure}[t]
	\centering
	\includegraphics[width=\linewidth]{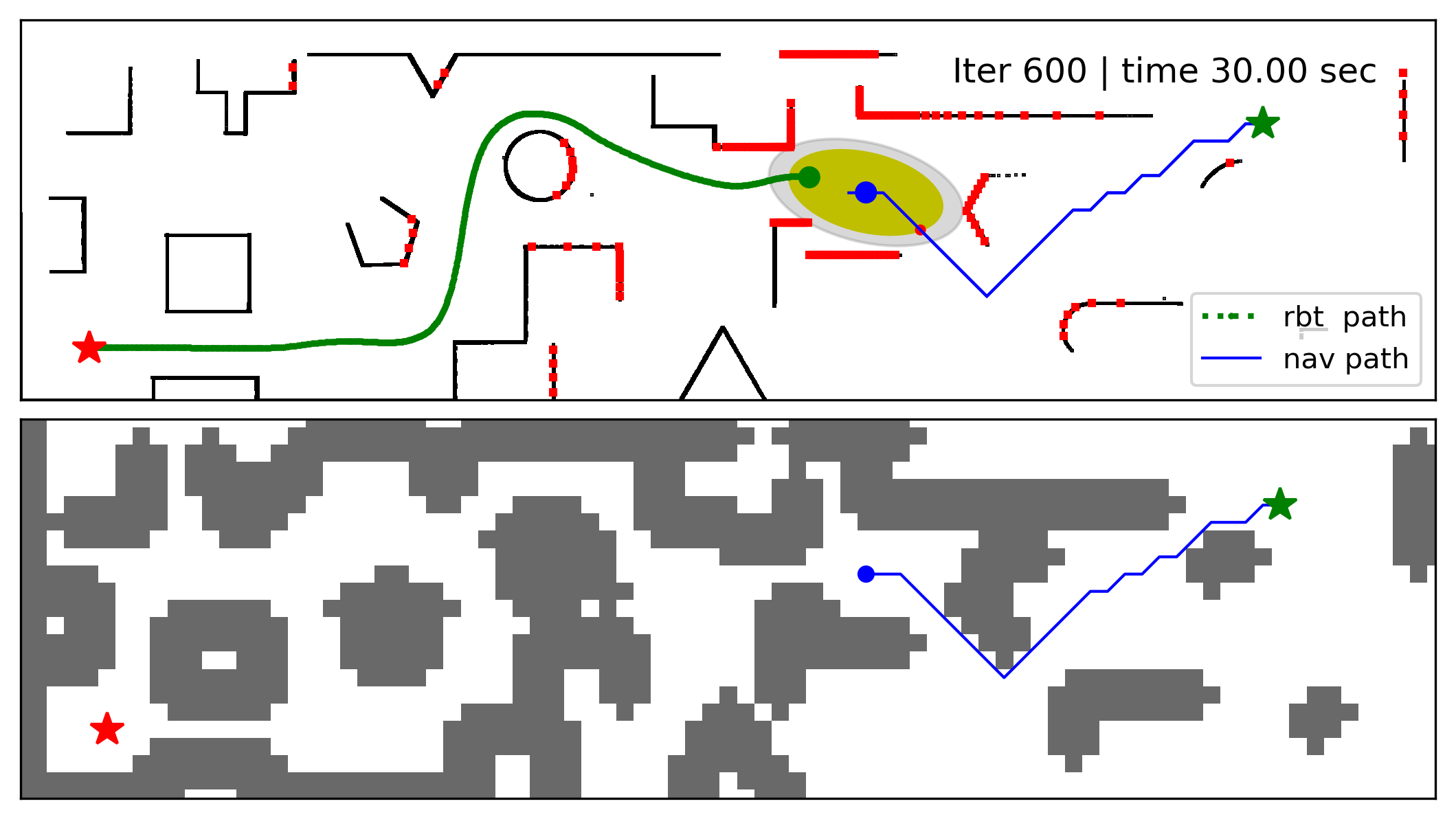}%
	\caption{Snapshot of the robot-governor system navigating the environment shown in Fig.~\ref{fig:complex_cooridor_sim}. Streaming lidar scan measurements (red dots) are used to update an occupancy grid map (black lines and white regions in top plot) of the unknown environment. An acceleration-controlled robot (green dot) follows a virtual governor (blue dot) whose motion is modulated based on the local energy zone (yellow ellipse) and the directional distance to obstacles (gray ellipse). A navigation path (blue line) is periodically replanned using an $A^*$ planner and an inflated occupancy grid map (bottom plot).}
	\label{fig:complex_cooridor_sim_2D_status}
\end{figure}

\section{Evaluation}
\label{sec:evaluation}
Consider an acceleration-controlled robot, stabilized by a proportional-derivative (PD) controller:
\begin{equation}
\label{eq:double_integrator}
\ddrp = \Bu \coloneqq -2 k \rp - \zeta \drp.
\end{equation}
The closed-loop robot-governor system is:
\begin{equation}
\label{eq:closed-loop_rgs_double_integrator}
\dot{\hat{\Bs}}  = 
\begin{bmatrix}
\drp \\
\ddrp\\
\dgp
\end{bmatrix} = 
\begin{bmatrix}
	\rv \\
	- 2 k (\rp - \gp) - \damping \rv \\
	-k_g (\gp - \lpg(\hat{\Bs}))
\end{bmatrix}
\end{equation}
Several experiments will be shown to compare the performance of two path-following controllers:
\begin{itemize}
    \item \textbf{Controller~1}~\cite{Gov_ICRA17}: uses a Lyapunov function to ensure stability and collision avoidance. The robot's kinetic and potential energy, i.e., $E(\hat{\Bs}) \coloneqq k \norm{\rp - \gp}^2 + \frac{1}{2} \norm{\rv}^2$ is used to define a spherical local safe zone.
    \item \textbf{Controller~2} (ours): uses SDDM trajectory bounds (Lemma~\ref{lemma:traj_bound}) to define an ellipsoidal local safe zone. 
\end{itemize}
In visualizations, the governor and robot positions are shown by a blue and green dot, respectively. A light-gray ellipse/ball indicates the distance from the governor to the nearest obstacles, while the local safe zone $\LS(\hat{\Bs})$ is indicated by a yellow ellipse/ball. The projected goal $\lpg$ is shown as a small red dot at the boundary of $\LS(\hat{\Bs})$. The controller parameters were $k = k_g = 1$, $\damping = 2 \sqrt{2}$, $c_1 = 1, c_2 = 4$.

\textbf{Trajectory Prediction using Different Metrics.\;} First, we demonstrate that predicting the robot trajectory using our directional metric has some desirable properties for enforcing safety constraints. Fig.~\ref{fig:traj_est_comp} compares trajectory bounds obtained from Lemma~\ref{lemma:traj_bound} for~\eqref{eq:double_integrator} using a Euclidean metric and an SDDM. Since the robot dynamics are simple, a tight directional trajectory bound $\CE_{\BQ_t}(\Bzero, \eta(t))$ can be obtained from an exact computation of the critical points according to~\eqref{eq:critical_points}. It is clear that the ellipsoid bounds on the system trajectory are less conservative (smaller area/volume) than the spherical bounds at beginning. 
Unlike a Lyapunov function, the ellipse $\CE_{\BQ(t)}(\Bzero, \eta(t))$ bounding the robot trajectory is not forward invariant. It can be shown that requiring invariance of directional ellipsoids ($\CE_{\BQ_{(t_1)}} \subset \CE_{\BQ_{(t_2)}} \,\, \forall t_2 \geq t_1$) would need infinite damping unless $\BQ = k \BI$ for some $k > 0$, causing the metric to lose directionality. In contrast to control designs based on Lyapunov function invariance, we make an interesting observation that a safe and stable controller can be defined even if the sets containing the system trajectory are not strictly shrinking over time.

\textbf{Corridor Environment.\;} We show that utilizing a directional metric in the control design alleviates the corridor effect discussed in Sec.~\ref{sec:introduction}. We setup a simulation requiring a robot to navigate through a corridor (Fig.~\ref{fig:corridor_snapshot}). The results show that controller 1, using a Lyapunov function with spherical level sets, suffers from the corridor effect while the proposed controller 2, making directional predictions about the system trajectory, does not.

\textbf{Sparse Environment with Circular Obstacles.\;} This experiment compares the two controllers in a longer path-following task in an environment with circular obstacles. Snapshots illustrating how the two controllers judge distances to obstacles and define a local energy zone are shown in Fig.~\ref{fig:static_sparse_sim}. It can be seen that the controller equipped with a directional sensing ability has a better understanding of the local environment geometry, leading to a larger, elongated local safe zone set. As a result, controller 2 does not need to slow down for low-risk lateral obstacles, leading to smoother and faster navigation. The directional bounds on the robot trajectory obtained analytically, according to eq.~\eqref{eq:critical_points}, and from the SDP in eq.~\eqref{eq:SDP_UB} are compared in Fig.~\ref{fig:Eta_max_comparison}.

\textbf{Unknown Environment with Arbitrary Obstacles.\;} This experiment demonstrates that our controller can work in a complex unknown environment, shown in Fig.~\ref{fig:complex_cooridor_sim}, relying only on local onboard measurements. The directional distance $d_{\BQ(t)}^2(\gp(t), \mathcal{O})$ from the governor to the obstacles is computed from the latest lidar scans. The path $\CP$ is re-planned from the current governor position to the goal using an occupancy grid map constructed from the lidar scans over time, as illustrated in Fig.~\ref{fig:complex_cooridor_sim_2D_status}. 

\section{Conclusion}
\label{sec:conclusion}
This paper presented a path-following controller relying on a state-dependent directional metric for trajectory prediction and safety quantification. The controller achieves fast tracking in unknown complex environments, mitigating the corridor effect, while providing safety and stability guarantees. The approach offers a promising direction for ensuring the safety of mobile autonomous systems operating in dynamically changing environments. Our design places very minimal requirements on the navigation path (piecewise-continuity) but the overall system performance depends on the path quality. Future work will focus on incorporating safety and stability considerations in path planning, applying our results to complex robot dynamics, and demonstrating the effectiveness of our design in hardware experiments.
\clearpage
\newpage

{\small
\bibliographystyle{cls/IEEEtran}
\bibliography{bib/ref.bib}

\begin{thebibliography}{10}
\providecommand{\url}[1]{#1}
\csname url@rmstyle\endcsname
\providecommand{\newblock}{\relax}
\providecommand{\bibinfo}[2]{#2}
\providecommand\BIBentrySTDinterwordspacing{\spaceskip=0pt\relax}
\providecommand\BIBentryALTinterwordstretchfactor{4}
\providecommand\BIBentryALTinterwordspacing{\spaceskip=\fontdimen2\font plus
\BIBentryALTinterwordstretchfactor\fontdimen3\font minus
  \fontdimen4\font\relax}
\providecommand\BIBforeignlanguage[2]{{%
\expandafter\ifx\csname l@#1\endcsname\relax
\typeout{** WARNING: IEEEtran.bst: No hyphenation pattern has been}%
\typeout{** loaded for the language `#1'. Using the pattern for}%
\typeout{** the default language instead.}%
\else
\language=\csname l@#1\endcsname
\fi
#2}}

\bibitem{ARAstar}
M.~Likhachev, G.~J. Gordon, and S.~Thrun, ``{ARA*: Anytime A* with provable
  bounds on sub-optimality},'' in \emph{Advances in neural information
  processing systems (NeurIPS)}, 2004, pp. 767--774.

\bibitem{RRTstar}
S.~Karaman and E.~Frazzoli, ``{Sampling-based algorithms for optimal motion
  planning},'' \emph{The International Journal of Robotics Research (IJRR)},
  vol.~30, no.~7, pp. 846--894, 2011.

\bibitem{lqr_rrt_star_perez2012}
A.~Perez, R.~Platt, G.~Konidaris, L.~Kaelbling, and T.~Lozano-Perez,
  ``{LQR-RRT*: Optimal sampling-based motion planning with automatically
  derived extension heuristics},'' in \emph{IEEE International Conference on
  Robotics and Automation (ICRA)}, 2012, pp. 2537--2542.

\bibitem{kinodynamic_rrts_webb2013}
D.~J. Webb and J.~van~den Berg, ``{Kinodynamic RRT*: Asymptotically optimal
  motion planning for robots with linear dynamics},'' in \emph{IEEE
  International Conference on Robotics and Automation (ICRA)}, 2013, pp.
  5054--5061.

\bibitem{pacelli2018integration}
V.~Pacelli, O.~Arslan, and D.~E. Koditschek, ``{Integration of local geometry
  and metric information in sampling-based motion planning},'' in \emph{IEEE
  International Conference on Robotics and Automation (ICRA)}, 2018, pp.
  3061--3068.

\bibitem{SFC}
S.~Liu, M.~Watterson, K.~Mohta, K.~Sun, S.~Bhattacharya, C.~J. Taylor, and
  V.~Kumar, ``{Planning dynamically feasible trajectories for quadrotors using
  safe flight corridors in 3-d complex environments},'' \emph{IEEE Robotics and
  Automation Letters (RA-L)}, vol.~2, no.~3, 2017.

\bibitem{SFC_FM}
F.~Gao, W.~Wu, Y.~Lin, and S.~Shen, ``{Online safe trajectory generation for
  quadrotors using fast marching method and bernstein basis polynomial},'' in
  \emph{IEEE International Conference on Robotics and Automation (ICRA)}, 2018,
  pp. 344--351.

\bibitem{Liu_DynamicTrajectoryPlanning_IROS17}
S.~Liu, N.~Atanasov, K.~Mohta, and V.~Kumar, ``Search-based motion planning for
  quadrotors using linear quadratic minimum time control,'' in \emph{IEEE/RSJ
  Int. Conf. on Intelligent Robots and Systems (IROS)}, 2017.

\bibitem{funnel_idea}
M.~T. Mason and J.~K. Salisbury~Jr, \emph{{Robot Hands and the Mechanics of
  Manipulation}}.\hskip 1em plus 0.5em minus 0.4em\relax The MIT Press,
  Cambridge, MA, 1985.

\bibitem{burridge1999sequential}
R.~Burridge, A.~Rizzi, and D.~Koditschek, ``{Sequential composition of
  dynamically dexterous robot behaviors},'' \emph{The International Journal of
  Robotics Research (IJRR)}, vol.~18, no.~6, pp. 534--555, 1999.

\bibitem{lqr_tree_tedrake2009}
R.~Tedrake, I.~R. Manchester, M.~Tobenkin, and J.~W. Roberts, ``{LQR-trees:
  Feedback Motion Planning via Sums-of-Squares Verification},'' \emph{The
  International Journal of Robotics Research (IJRR)}, 2009.

\bibitem{Funnel_lib}
A.~Majumdar and R.~Tedrake, ``{Funnel libraries for real-time robust feedback
  motion planning},'' \emph{The International Journal of Robotics Research
  (IJRR)}, vol.~36, no.~8, pp. 947--982, 2017.

\bibitem{gawron2017vfo}
T.~Gawron and M.~M. Michalek, ``{VFO feedback control using
  positively-invariant funnels for mobile robots travelling in polygonal worlds
  with bounded curvature of motion},'' in \emph{IEEE International Conference
  on Advanced Intelligent Mechatronics (AIM)}, 2017, pp. 124--129.

\bibitem{gawron_2018IROS_VFO}
------, ``{Algorithmization of constrained motion for car-like robots using the
  VFO control strategy with parallelized planning of admissible funnels},'' in
  \emph{IEEE/RSJ International Conference on Intelligent Robots and Systems
  (IROS)}, 2018, pp. 6945--6951.

\bibitem{sos_stability}
W.~Tan, A.~Packard, \emph{et~al.}, ``{Stability region analysis using
  polynomial and composite polynomial Lyapunov functions and sum-of-squares
  programming},'' \emph{IEEE Transactions on Automatic Control (TAC)}, vol.~53,
  no.~2, p. 565, 2008.

\bibitem{CBF_ames2014rapidly}
A.~D. Ames, K.~Galloway, K.~Sreenath, and J.~W. Grizzle, ``{Rapidly
  exponentially stabilizing control lyapunov functions and hybrid zero
  dynamics},'' \emph{IEEE Transactions on Automatic Control}, vol.~59, no.~4,
  pp. 876--891, 2014.

\bibitem{CBF_wu2015safety}
G.~Wu and K.~Sreenath, ``{Safety-critical and constrained geometric control
  synthesis using control lyapunov and control barrier functions for systems
  evolving on manifolds},'' in \emph{American Control Conference (ACC)}, 2015,
  pp. 2038--2044.

\bibitem{CBF_ames2016control}
A.~D. Ames, X.~Xu, J.~W. Grizzle, and P.~Tabuada, ``{Control barrier function
  based quadratic programs for safety critical systems},'' \emph{IEEE
  Transactions on Automatic Control}, vol.~62, no.~8, pp. 3861--3876, 2016.

\bibitem{CBF_quadrotor}
G.~Wu and K.~Sreenath, ``{Safety-critical control of a planar quadrotor},'' in
  \emph{American Control Conference (ACC)}, 2016, pp. 2252--2258.

\bibitem{garone2016_ERG}
E.~Garone and M.~M. Nicotra, ``{Explicit reference governor for constrained
  nonlinear systems},'' \emph{IEEE Transactions on Automatic Control (TAC)},
  vol.~61, no.~5, pp. 1379--1384, 2016.

\bibitem{kolmanovsky2014ref_cmd_gov}
I.~Kolmanovsky, E.~Garone, and S.~Di~Cairano, ``{Reference and command
  governors: A tutorial on their theory and automotive applications},'' in
  \emph{American Control Conference (ACC)}, 2014.

\bibitem{Gov_ICRA17}
O.~Arslan and D.~E. Koditschek, ``{Smooth extensions of feedback motion
  planners via reference governors},'' in \emph{IEEE International Conference
  on Robotics and Automation (ICRA)}, 2017.

\bibitem{lavalle2006planning}
S.~LaValle, \emph{Planning Algorithms}.\hskip 1em plus 0.5em minus 0.4em\relax
  Cambridge University Press, 2006.

\bibitem{boyd_LMI_book}
S.~Boyd, L.~El~Ghaoui, E.~Feron, and V.~Balakrishnan, \emph{{Linear matrix
  inequalities in system and control theory}}.\hskip 1em plus 0.5em minus
  0.4em\relax Siam, 1994, vol.~15.

\bibitem{khalil2002nonlinear}
H.~Khalil, \emph{Nonlinear systems}.\hskip 1em plus 0.5em minus 0.4em\relax
  Prentice Hall, 2002.

\end{thebibliography}
}
\end{document}